\documentclass[journal]{IEEEtran}
\usepackage{amsmath,graphicx,bm,amssymb,amsthm,enumerate,epsfig,psfrag,cases,mathtools}
\usepackage[shortlabels]{enumitem}
\renewcommand{\(}{\left(}
\renewcommand{\)}{\right)}
\renewcommand{\[}{\left[}
\renewcommand{\]}{\right]}

\newcommand{\End}[1]{{\rm{End}}}

\renewcommand{\log}[1]{{\rm{log}}#1}

\newcommand{\var}[1]{{\rm{var}}\[#1\]}

\newtheorem{lemma}{Lemma}
\newtheorem{definition}{Definition}

\newtheorem{prop}{Proposition}

\usepackage{fontenc,multicol,bigints,setspace}
\usepackage{caption}
\usepackage{subcaption}
\usepackage{inputenc}
\usepackage[square,sort,compress,comma,numbers]{natbib}
\usepackage{epstopdf,pst-node}

\usepackage{mathtools}
\DeclarePairedDelimiter{\ceil}{\lceil}{\rceil}

\begin{document}
\title{Block Format Error Bounds \\ and Optimal Block Size Selection}

\author{
    \IEEEauthorblockN{Ilya Soloveychik, Ilya Lyubomirsky, Xin Wang and Sudeep Bhoja} \\
    \IEEEauthorblockA{\normalsize d-Matrix}
}

\maketitle

\begin{abstract}
The amounts of data that need to be transmitted, processed, and stored by the modern deep neural networks have reached truly enormous volumes in the last few years calling for the invention of new paradigms both in hardware and software development. One of the most promising and rapidly advancing frontiers here is the creation of new numerical formats. In this work we focus on the family of block floating point numerical formats due to their combination of wide dynamic range, numerical accuracy, and efficient hardware implementation of inner products using simple integer arithmetic. These formats are characterized by a block of mantissas with a shared scale factor. The basic Block Floating Point (BFP) format quantizes the block scales into the nearest powers of two on the right. Its simple modification - Scaled BFP (SBFP) - stores the same scales in full precision and thus allows higher accuracy. In this paper, we study the statistical behavior of both these formats rigorously. We develop asymptotic bounds on the inner product error in SBFP- and BFP-quantized normally distributed vectors. Next, we refine those asymptotic results to finite dimensional settings and derive high-dimensional tight bounds for the same errors. Based on the obtained results we introduce a performance measure assessing accuracy of any block format. This measure allows us to determine the optimal parameters, such as the block size, yielding highest accuracy. In particular, we show that if the precision of the BFP format is fixed at $4$ bits, the optimal block size becomes $64$. All theoretical derivations are supported by numerical experiments and studies on the weights of publicly available pretrained neural networks.
\end{abstract}

\begin{IEEEkeywords}
Deep leaning, in-memory compute, block floating point, quantization.
\end{IEEEkeywords}

\section{Introduction}
\label{sec:intro}
In the recent years, online services addressing requests of billions of customers worldwide have become ubiquitous. Such customization is usually provided by Deep Neural Networks (DNN) deployed at data centers and equipped with unprecedented computational capabilities. The recently discovered attention paradigm \cite{vaswani2017attention} has led to the explosion in the variety and complexity of the modern high-performance DNNs based on the transformer architecture \cite{vaswani2017attention, lin2021survey, kalyan2021ammus}. Transformers have achieved tremendous success in all Natural Language Processing (NLP) tasks due to their ability to learn universal language representations from large volumes of unlabeled text data and then transfer this knowledge to downstream tasks \cite{devlin2018bert, radford2019language}. Importantly, the most common operation in transformers consuming the majority of the compute and power resources is matrix multiplication \cite{wang2019benchmarking, srinivas2021bottleneck}. Creating the infrastructure that will make the multiplication of the involved large matrices faster and cheaper without impacting the product accuracy has become the central focus of the leading researchers and companies working in this field \cite{elster2022nvidia, pang2022ai, huang2022hardware}.

Enabling faster inference and more reliable training of large DNNs is usually limited by the arithmetic bandwidth of the underlying hardware \cite{wang2019benchmarking, srinivas2021bottleneck, pang2022ai, huang2022hardware}. Quite often designers and operators of transformer-based networks exploit Graphics Processing Unit (GPUs) as the workhorse because they offer higher arithmetic density per silicon area than Central Processing Unit (CPUs) \cite{wang2019benchmarking, srinivas2021bottleneck}. Inference and training tasks in all these cases are usually processed through full precision floating-point units. However, the computational load required by modern transformers has reached such enormous volumes that traditional GPUs cannot fully meet the growing demand, pushing both accelerators and high performance GPUs towards narrow arithmetic. As a consequence, unmatched research efforts have been applied by the engineering community to replace narrow floating-point with even denser fixed-point representations \cite{zadeh2020gobo, zafrir2019q8bert, shen2020q, zhang2020ternarybert}. Despite the excellent gains in both speed and computational density achieved by fixed-point arithmetic, training using it or even half-precision floating-point arithmetic has not provided clear evidence in its favor due to the limited dynamic range inherent in such formats \cite{micikevicius2017mixed}.

Block Floating Point (BFP) numerical formats have received renewed interest recently for DNNs inference applications due to their combination of wide dynamic range, numerical accuracy, and efficient hardware implementation of inner products using simple integer arithmetic \cite{darvish2020pushing, lyubomirsky2022clock}. BFP formats are characterized by a block of mantissas with a shared scale factor. The simplest implementation has the scale factor as a power of two, the so-called exponent, in which case the inner product between two blocks involves multiplying the integer mantissas and adding the two block exponents.

Another useful version of the BFP format involves using a floating point scale factor. The latter provides more numerical accuracy at the expense of slightly increased complexity for inner product operation since we need a floating point multiplication of the scale factors \cite{dai2021vs}. In this paper, we consider both formats calling the former simply BFP and the latter SBFP, where the S stands for floating point scale factor. Also, for simplicity and without much impact on the result, in this study we treat the block scales of SBFP as infinite precision numbers. 

The contribution of this article is three-fold. First, we develop asymptotic bounds on the inner product error in SBFP- and BFP-quantized normally distributed vectors. Second, we refine those asymptotic results to finite dimensional settings and derive high-dimensional tight bounds for the same errors. Third, we utilize the obtained results to introduce a performance measure assessing the typical accuracy of any block format. Such a framework enables us to determine the optimal values of the block format parameters providing the highest possible accuracy. An example of such parameter selection is the block size tuning, which we study in detail. In particular, we show that if the precision of the BFP format is fixed at $4$ bits, the optimal block size becomes $n=64$. All our theoretical derivations are supported by numerical simulations and studies on the weights of publicly available pretrained neural networks. To the best of our knowledge this article is the first work shedding light on the statistical properties of block formats through rigorous studies.

The rest of the text is organized as follows. First, we define the formats and state the problem of bounding the inner product quantization error in Sections \ref{sec:quant} and \ref{sec:stat_model}, respectively. In Section \ref{sec:asymp} we formulate the asymptotic bounds for large block sizes which provide much insight into the behavior of the scalar product quantization error. Section \ref{sec:highdim} then refines the asymptotic bounds into high-dimensional bounds which better suit the needs of small block accuracy analysis and optimal parameter selection in practice. In Section \ref{sec:optimal_size} we introduce a new framework allowing assessment of any block formats with respect to the optimal behavior of SBFP and show how such a methodology can be used to determine the optimal block size for any block format and dataset. Section \ref{sec:num_res} provides extensive numerical simulations using synthetic data as well as publicly available pretrained neural networks supporting our theoretical findings and conclusions. The technical details of the proofs can be found in the Appendix.

\section{Block Quantization}
\label{sec:quant}
Recent extensive studies of popular DNNs based on the transformer mechanism have shown that their trained weights often have comparable amplitudes \cite{darvish2020pushing, lyubomirsky2022clock}. This observation naturally leads to the following simple compression algorithm. Given a list of weights, we can store some common scaling factor in high precision and a set of small precision integers for each element of the list. This idea is formalized by the family of block formats studied in this work. 

\begin{definition}[Block Format]
A block format is a triple of a block size $n \in \mathbb{N}$, mantissa precision $p \in \mathbb{N}$, and a numerical format for the scaling factor $S$. Given such a triple, all the elements $\{M_i\}_{i=1}^n$ of a block are stored as integers aka mantissas in the range of $\[-(2^{p-1}-1),\, 2^{p-1}-1\]$, and their values are computed as
\begin{equation}
\{S\cdot M_1,\dots,S\cdot M_n\}.
\end{equation}
\end{definition}

The block size $n$ and precision $p$ usually play the role of degrees of freedom or format parameters that can be altered. Therefore, a block format is determined by the scale factor numerical format. 

\subsection{SBFP Quantization}
\label{sec:quant_scheme}
Let us fix the block size $n$ and mantissa precision $p$ for a moment. Assume we are given $n$ real numbers $X_1,\dots,X_n \in \mathbb{R}$ that we want to store in a block. Denote their maximal absolute value by
\begin{equation}
Y = \max_{i=1}^n |X_i|,
\end{equation}
and set
\begin{equation}
\alpha = 2^{p-1}-1.
\end{equation}
Let us rescale the $X_i$ values as
\begin{equation}
Z_i = \alpha\frac{X_i}{Y} \in [-\alpha,\, \alpha],\;\; i=1,\dots,n.
\end{equation}
Next we round every $Z_i$ to the closest integer and these will be their corresponding mantissas, $\mathcal{I}\[Z_i\]$. The Scaled Block Floating Point (SBFP) representation is now characterized by the scale $Y$ which is stored in full precision and $n$ integers $\mathcal{I}\[Z_i\],\; i=1,\dots,n$ stored using $p$ bits each.




\subsection{BFP Quantization}
\label{sec:bfp_quant}
For the plain Block Floating Point (BFP) quantization format, the way we memorize the scale changes. Here, instead of storing the maximal absolute value we keep its nearest power of $2$ from above,
\begin{equation}
\overline{\frac{Y}{\alpha}} = 2^{\ceil*{\log_2 \frac{Y}{\alpha}}},
\end{equation}
where in the right-hand side we used the standard notation for the ceiling function. As a consequence, quite often the new scalar will be significantly larger than the optimal choice which might have detrimental effect on the resulting accuracy, as we shall see later.

\section{Statistical Model}
\label{sec:stat_model}
As mentioned earlier, one of the central problems in the modern hardware development focusing at large-scale DNNs is designing processing units capable of accurate matrix multiplication under heavy memory, compute, and power limitations \cite{wang2019benchmarking, srinivas2021bottleneck,elster2022nvidia, pang2022ai, huang2022hardware}. An inherent part of this endeavor is examining the accuracy of the product as a function of the numerical format used to quantize the multipliers. In this work, we focus on the elementary building block of matrix multiplication - the scalar product of two vectors. To study the properties of the SBFP and BFP quantized inner products systematically and formulate universally meaningful claims, we have to exploit well-specified populations of weights that would be both useful in practice and amenable to rigorous derivations. We chose to use normally distributed data. First, normal populations well approximate real distributions of weights in neural networks because the latter are usually initialized normally at random and on average shift only slightly during the training process \cite{franchi2020tradi}. Second, normal distributions are easy to deal with from both theoretical and practical points of view. Namely, quite often theoretical results involving normal populations allow solution by quadrature yielding meaningful expressions that provide important insights into the nature of the observed phenomena, like in our case. In addition, normal variables are easy to simulate due a huge number of existing algorithms generating normal pseudo-random numbers of good quality.


Assume we have two vectors $\left\{X_i^{(j)}\right\}_{i=1}^n,\; j=1,2$ of $n$ random variables which are i.i.d. (independent and identically distributed) normal of mean zero and variance $\sigma^2$ each. After quantizing them into a block format using the scheme described in Section \ref{sec:quant_scheme} above, denote the absolute quantization error of each element by
\begin{equation}
\Delta Z_i^{(j)} = Z_i^{(j)} - \mathcal{I}\[Z_i^{(j)}\],\;\; i=1,\dots,n,\;j=1,2.
\end{equation}
Note that for each $j$ there exists an index $i_*^{(j)}$, for which $Z_{i_*}^{(j)}$ is equal to $\alpha_j$ or $-\alpha_j$ and is therefore nonrandom. Note also that already for moderate values of precision $p_j$, we can assume without loss of accuracy that $\Delta Z_i^{(j)},\; i \neq i_*^{(j)}$ are uniformly distributed within $\[-\frac{1}{2},\frac{1}{2}\]$.

For the SBFP format study, the quantity we shall be focusing on below is the absolute error of the scalar product of two blocks,
\begin{equation}
\label{eq:sc_error}
\Delta E_s = \sum_{i}X_i^{(1)}X_i^{(2)} - \frac{Y^{(1)}}{\alpha_1}\frac{Y^{(2)}}{\alpha_2}\sum_i \mathcal{I}\[Z_i^{(1)}\]\mathcal{I}\[Z_i^{(2)}\],
\end{equation}
where the subscript $s$ stands for SBFP quantization. An analogous error for BFP will be denoted as $\Delta E_b$. In this paper we will be focusing on the study of the statistical properties of these errors as a function of block format parameters, such as precision $p$ and block size $n$.



\section{Asymptotic Bounds}
\label{sec:asymp}
We start by deriving the limiting bounds on the approximation quality of SBFP and BFP formats when the block size $n$ tends to infinity. Such results will provide a number of useful insights into the quantization properties of the formats at hand. Later we refine them for finite block sizes. To rigorously formulate the results will use the following auxiliary definition.
\begin{definition}[\cite{vershynin2018high}]
\label{def:subgauss}
We shall say that a real centered random variable $W$ is sub-Gaussian with variance proxy $\sigma^2>0$ if its moment generating function can be bounded as
\begin{equation}
\mathbb{E}\[e^{sW}\] \leqslant e^{\frac{\sigma^2s^2}{2}},\;\; \forall s \in \mathbb{R}.
\end{equation}
\end{definition}
As follows from the name, the statistical properties of sub-Gaussian random variables mimic those of Gaussian ones. Lemma \ref{lem:subgaus_prop} from the Appendix lists a few such properties that will be useful in developing the bounds at question. For a more complete review of sub-Gaussianity we refer the reader to \cite{vershynin2018high} and the references cited therein.

\subsection{SBFP Quantization}
\begin{prop}[Asymptotic SBFP Bound]
\label{prop:sbfp_bound_asymp}
For $1 \ll \ln\, n \ll \max_j 2^{p_j}$, the variance of the error of the inner product of SBFP-quantized vectors can be bounded by
\begin{equation}
\label{eq:var_sbfp_loose_bound}
\var{\Delta E_s} \leqslant \frac{\sigma^4}{8}\[\frac{1}{2^{2(p_1-1)}} + \frac{1}{2^{2(p_2-1)}}\]n\,\ln\(\frac{4n^2}{2\pi\ln(2n^2/\pi)}\),
\end{equation}
moreover $\Delta E_s$ is a symmetric sub-Gaussian random variable with variance proxy $2\var{\Delta E_s}$ and the tail bound
\begin{equation}
\label{eq:sbfp_bound}
\mathbb{P}\[\left|\Delta E_s\right| \geqslant t\] \leqslant 2\exp\(-\frac{t^2}{4\var{\Delta E_s}}\).
\end{equation}
\end{prop}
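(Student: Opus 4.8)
The plan is to first convert $\Delta E_s$ into a sum of independent-across-coordinates contributions driven purely by the quantization noises $\Delta Z_i^{(j)}$. Substituting $X_i^{(j)}=\frac{Y^{(j)}}{\alpha_j}Z_i^{(j)}$ and $\mathcal{I}[Z_i^{(j)}]=Z_i^{(j)}-\Delta Z_i^{(j)}$ into (\ref{eq:sc_error}), the common factor $\frac{Y^{(1)}Y^{(2)}}{\alpha_1\alpha_2}$ factors out and the products telescope to
\begin{equation*}
\Delta E_s = \frac{Y^{(1)}}{\alpha_1}\frac{Y^{(2)}}{\alpha_2}\sum_i\left(Z_i^{(1)}\Delta Z_i^{(2)} + Z_i^{(2)}\Delta Z_i^{(1)} - \Delta Z_i^{(1)}\Delta Z_i^{(2)}\right).
\end{equation*}
Re-absorbing $\frac{Y^{(j)}}{\alpha_j}Z_i^{(j)}=X_i^{(j)}$ into the first two terms exhibits $\Delta E_s$ as a \emph{first-order} part $\sum_i\left(X_i^{(1)}\frac{Y^{(2)}}{\alpha_2}\Delta Z_i^{(2)} + X_i^{(2)}\frac{Y^{(1)}}{\alpha_1}\Delta Z_i^{(1)}\right)$ plus a \emph{second-order} cross term $-\frac{Y^{(1)}Y^{(2)}}{\alpha_1\alpha_2}\sum_i\Delta Z_i^{(1)}\Delta Z_i^{(2)}$, which I expect to be negligible in the regime $\ln n\ll\max_j 2^{p_j}$.

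Next I would condition on the data $\{X_i^{(j)}\}$ (equivalently on the $Y^{(j)}$ and $Z_i^{(j)}$) and invoke the modeling assumption that, apart from the deterministic maximizing indices $i_*^{(j)}$, the $\Delta Z_i^{(j)}$ are independent and uniform on $[-\frac12,\frac12]$, hence mean zero, bounded by $\frac12$, and of variance $\frac{1}{12}$. Conditionally every summand has mean zero, so $\mathbb{E}[\Delta E_s\mid X]=0$, and coordinatewise independence (plus the vanishing of the odd-moment covariances) gives the conditional variance as a weighted sum of the $\frac{1}{12}$'s: $\var{\Delta E_s\mid X}=\frac{1}{12}\left(\frac{(Y^{(2)})^2}{\alpha_2^2}\sum_i (X_i^{(1)})^2 + \frac{(Y^{(1)})^2}{\alpha_1^2}\sum_i(X_i^{(2)})^2\right)+(\text{cross})$. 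By the law of total variance and $\mathbb{E}[\Delta E_s\mid X]=0$ we have $\var{\Delta E_s}=\mathbb{E}[\var{\Delta E_s\mid X}]$; crucially the two vectors are independent, so $Y^{(2)}$ decouples from $\sum_i (X_i^{(1)})^2$ with $\mathbb{E}[\sum_i (X_i^{(1)})^2]=n\sigma^2$, leaving $\var{\Delta E_s}\approx\frac{n\sigma^2}{12}\left(\frac{\mathbb{E}[(Y^{(1)})^2]}{\alpha_1^2}+\frac{\mathbb{E}[(Y^{(2)})^2]}{\alpha_2^2}\right)$ after discarding the lower-order cross contribution.

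The main quantitative obstacle is the rigorous control of $\mathbb{E}[(Y^{(j)})^2]=\sigma^2\,\mathbb{E}[\max_{i\le n}(X_i^{(j)}/\sigma)^2]$, the expected squared maximum of $n$ half-normals. I would bound it via the tail integral $\mathbb{E}[\max_i W_i^2]=\int_0^\infty\mathbb{P}(\max_i W_i^2>u)\,du$ together with the union bound $\mathbb{P}(\max_i|W_i|>t)\le n\,\mathbb{P}(|W|>t)$ and the half-normal tail $\mathbb{P}(|W|>t)\approx\sqrt{2/\pi}\,e^{-t^2/2}/t$; the extreme-value balance $n\,\mathbb{P}(|W|>b_n)\approx1$ then yields, after the subleading $\ln b_n$ correction, $b_n^2\approx\ln\frac{2n^2}{\pi\ln(2n^2/\pi)}$, which is exactly the logarithm in (\ref{eq:var_sbfp_loose_bound}) once written as $\frac{4n^2}{2\pi\ln(2n^2/\pi)}$. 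Since $\mathbb{E}[\max_i W_i^2]=\ln\frac{2n^2}{\pi\ln(2n^2/\pi)}+O(1)$, the assumption $1\ll\ln n$ ensures the $O(1)$ slack is dominated, while $\ln n\ll\max_j 2^{p_j}$ validates the uniform-noise model and makes both the cross term and the gap between $\frac{1}{\alpha_j^2}$ and $\frac{1}{2^{2(p_j-1)}}$ negligible; collecting all of these into one comfortable constant upgrades the tight prefactor $\frac{1}{12}$ to the stated $\frac{1}{8}$ and replaces $\frac{1}{\alpha_j^2}$ by $\frac{1}{2^{2(p_j-1)}}$, yielding (\ref{eq:var_sbfp_loose_bound}).

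Finally, for the distributional claim I would use that, conditionally on $X$, $\Delta E_s$ is to first order a sum of independent symmetric bounded summands $c_i^{(j)}\Delta Z_i^{(j)}$; since a uniform variable on $[-\frac12,\frac12]$ is sub-Gaussian with variance proxy equal to its variance $\frac{1}{12}$ (from $\sinh(x)/x\le e^{x^2/6}$), the conditional proxy coincides with $\var{\Delta E_s\mid X}$, and the properties collected in Lemma \ref{lem:subgaus_prop} make the conditional law sub-Gaussian. To pass to an unconditional statement I would write $\mathbb{E}[e^{s\Delta E_s}]=\mathbb{E}_X\big[\mathbb{E}[e^{s\Delta E_s}\mid X]\big]\le\mathbb{E}_X\big[e^{s^2\var{\Delta E_s\mid X}/2}\big]$ and control the fluctuations of the conditional proxy, which concentrate in the stated regime because they are governed by the well-concentrated $\sum_i (X_i)^2$ and the slowly varying $(Y^{(j)})^2$, absorbing them into the factor $2$ so that $\Delta E_s$ is sub-Gaussian with proxy $2\var{\Delta E_s}$. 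Symmetry is immediate: the sign flip $X^{(1)}\mapsto -X^{(1)}$ preserves the Gaussian law, leaves $Y^{(1)}$ unchanged, and sends each $\mathcal{I}[Z_i^{(1)}]$ to its negative (the rounding $\mathcal{I}$ is odd), whence $\Delta E_s\mapsto -\Delta E_s$ and $\Delta E_s\stackrel{d}{=}-\Delta E_s$. The tail bound (\ref{eq:sbfp_bound}) then follows from the standard sub-Gaussian tail with proxy $2\var{\Delta E_s}$. I expect the delicate points to be the rigorous bound on $\mathbb{E}[(Y^{(j)})^2]$ matching the precise logarithm and this last conditional-to-unconditional sub-Gaussian upgrade.
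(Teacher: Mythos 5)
Your proposal is correct in its essentials, but it reaches the bound by a genuinely different route than the paper. The paper never conditions on the full data: it conditions only on the two maxima $Y^{(1)},Y^{(2)}$ (Lemma \ref{lem:fixed_y_bound}), keeps the $X_i^{(3-j)}$ Gaussian, and works entirely at the level of sub-Gaussian variance proxies --- a moment/Stirling computation shows that $X_i^{(3-j)}\Delta Z_i^{(j)}$ has proxy $\sigma^2/4$ and $\Delta Z_i^{(1)}\Delta Z_i^{(2)}$ has proxy $2^{-7}$, the proxies add across coordinates (Lemma \ref{lem:subgaus_prop}), and the variance bound is extracted only at the end from ``variance $\leqslant$ half the proxy,'' which is exactly where the constant $\sigma^4/8$ comes from. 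You instead condition on all of $X$, compute the conditional variance \emph{exactly} by the law of total variance (uniform noise variance $\tfrac{1}{12}$ per coordinate), use independence of the two blocks to factorize terms like $\mathbb{E}\bigl[(Y^{(2)})^2\sum_i (X_i^{(1)})^2\bigr]$, and control $\mathbb{E}[(Y^{(j)})^2]$ by a union-bound/tail-integral extreme-value argument rather than by invoking the Gumbel limit (the paper's Lemmas \ref{lem:fisher} and \ref{lem:gumbel_dist_y}, which only require $\mathbb{E}[Y_n]$); your logarithmic factor matches theirs, since $\tfrac{2n^2}{\pi\ln(2n^2/\pi)}=\tfrac{4n^2}{2\pi\ln(2n^2/\pi)}$. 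Your variance-first route is more elementary and produces the sharper intermediate constant $\tfrac{1}{12}$, which you then correctly pad to $\tfrac18$ to absorb the cross term and the gap between $1/\alpha_j^2$ and $1/2^{2(p_j-1)}$ (a silent replacement the paper also makes between its display (\ref{eq:sumd_vp_uncond_2}) and the statement); the paper's proxy-first route buys the tail bound and the variance bound in a single stroke, with no separate sub-Gaussianity argument needed afterwards.

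Two caveats, neither fatal. First, conditioning on all of $X$ makes $\Delta Z_i^{(j)}$ deterministic under a literal reading of the quantizer, so your argument needs the stronger (but standard, and clearly intended) form of the modeling assumption that the noises are uniform and independent of the whole data vector, whereas the paper's conditioning only uses independence given the maxima. Second, the conditional-to-unconditional sub-Gaussian upgrade that you flag as delicate is precisely the step the paper also treats heuristically (it argues the law of $Y_n$ collapses to a point mass and ``plugs in the mean''): strictly speaking $\mathbb{E}_X\bigl[\exp\bigl(s^2\var{\Delta E_s\,|\,X}/2\bigr)\bigr]$ diverges for large $s$ because the conditional proxy is unbounded, so both your argument and the paper's implicitly require truncation to the overwhelming-probability event $Y^{(j)}\leqslant\sigma\sqrt{2\ln(2n)}$. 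Your concentration sketch is at the same level of rigor as the published proof, so I do not count this as a gap in your proposal specifically.
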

\begin{proof}
The proof can be found in the Appendix.
\end{proof}

Following the same line as above, we can get a similar asymptotic result for the BFP format as well. For simplicity of notation we assume $p_1=p_2$.
\begin{prop}[Asymptotic BFP Bound]
\label{prop:bfp_bound_asymp}
If $p_1=p_2=p$ and $1 \ll \ln\, n \ll 2^p$, the variance of the error of the inner product of BFP-quantized vectors can be bounded by
\begin{equation}
\label{eq:var_bfp_loose_bound}
\var{\Delta E_b} \leqslant \frac{\sigma^2}{4} n 2^{2\ceil*{\log_2 \(\frac{\sigma}{2^{p-1}}\) + \frac{1}{2}\log_2 \ln\(\frac{4n^2}{2\pi\ln(2n^2/\pi)}\)}},
\end{equation}
moreover $\Delta E_b$ is a symmetric sub-Gaussian random variable with variance proxy $2\var{\Delta E_b}$ and the tail bound
\begin{equation}
\label{eq:sbfp_bound}
\mathbb{P}\[\left|\Delta E_b\right| \geqslant t\] \leqslant 2\exp\(-\frac{t^2}{4\var{\Delta E_b}}\).
\end{equation}
\end{prop}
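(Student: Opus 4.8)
The plan is to mirror the proof of Proposition \ref{prop:sbfp_bound_asymp}, changing only the treatment of the scale, which for BFP is the power-of-two rounding $\overline{Y/\alpha}=2^{\ceil*{\log_2 Y/\alpha}}$ in place of the exact $Y/\alpha$. First I would write the BFP error analogously to \eqref{eq:sc_error},
\begin{equation*}
\Delta E_b=\sum_i X_i^{(1)}X_i^{(2)}-\overline{\tfrac{Y^{(1)}}{\alpha_1}}\,\overline{\tfrac{Y^{(2)}}{\alpha_2}}\sum_i M_i^{(1)}M_i^{(2)},
\end{equation*}
where $M_i^{(j)}=\mathcal{I}[\bar Z_i^{(j)}]$ are the mantissas of the rescaled values $\bar Z_i^{(j)}=X_i^{(j)}/\,\overline{(Y^{(j)}/\alpha_j)}$. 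Because $\overline{Y/\alpha}\geqslant Y/\alpha$ we have $|\bar Z_i^{(j)}|\leqslant\alpha_j$, so the mantissas still fit in $p$ bits. Writing $\bar Z_i^{(j)}=M_i^{(j)}+\Delta\bar Z_i^{(j)}$ and using $X_i^{(j)}=\overline{(Y^{(j)}/\alpha_j)}\,\bar Z_i^{(j)}$, I would expand the product, drop the bilinear remainder $\Delta\bar Z_i^{(1)}\Delta\bar Z_i^{(2)}$ (negligible in the regime $1\ll\ln n\ll 2^{p}$), and collapse the surviving terms into $\Delta E_b\approx\overline{(Y^{(2)}/\alpha_2)}\sum_i X_i^{(1)}\Delta\bar Z_i^{(2)}+\overline{(Y^{(1)}/\alpha_1)}\sum_i X_i^{(2)}\Delta\bar Z_i^{(1)}$, exactly as in the SBFP case but with the rounded scales.

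Next I would condition on the data $\{X_i^{(j)}\}$, hence on the scales and the mantissas. As observed after \eqref{eq:sc_error}, the residuals $\Delta\bar Z_i^{(j)}$ with $i\neq i_*^{(j)}$ are independent, symmetric, uniform on $[-\tfrac12,\tfrac12]$ and independent across the two vectors; their variance is $\tfrac1{12}$, and comparing the moment generating function $\sinh(s/2)/(s/2)$ of this law with a Gaussian one (Lemma \ref{lem:subgaus_prop}) shows that their sub-Gaussian variance proxy equals the variance $\tfrac1{12}$ as well. Conditionally, $\Delta E_b$ is then a weighted sum of independent symmetric sub-Gaussian variables, hence itself symmetric sub-Gaussian with conditional variance and conditional variance proxy both equal to
\begin{equation*}
V:=\tfrac1{12}\Big[\,\overline{(Y^{(2)}/\alpha_2)}^{\,2}\sum_i (X_i^{(1)})^2+\overline{(Y^{(1)}/\alpha_1)}^{\,2}\sum_i (X_i^{(2)})^2\Big].
\end{equation*}

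The crux, and the step where BFP genuinely departs from SBFP, is to replace the random $V$ by its typical value despite the scale $\overline{Y/\alpha}$ being a discontinuous, piecewise-constant function of $Y$; two-sided concentration of the Gaussian maximum, used for SBFP, is therefore unavailable. Instead I would exploit the monotonicity of $x\mapsto\overline{x}$. Reusing the extreme-value estimate behind Proposition \ref{prop:sbfp_bound_asymp} --- a union bound on the Gaussian tail giving $Y^2/\sigma^2\leqslant L$ with overwhelming probability, where $L:=\ln\(\frac{4n^2}{2\pi\ln(2n^2/\pi)}\)$ --- monotonicity yields, on that event, $\overline{Y/\alpha}\leqslant\overline{\sigma\sqrt{L}/2^{p-1}}=2^{\ceil*{\log_2\(\frac{\sigma}{2^{p-1}}\)+\frac12\log_2 L}}$, using $\alpha=2^{p-1}-1\approx 2^{p-1}$. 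Combining this one-sided bound with the $\chi^2$ concentration $\sum_i (X_i^{(j)})^2\approx n\sigma^2$ and taking expectations gives $\var{\Delta E_b}=\mathbb{E}[V]\leqslant\frac{\sigma^2}{4}\,n\,2^{2\ceil*{\log_2(\sigma/2^{p-1})+\frac12\log_2 L}}$, which is \eqref{eq:var_bfp_loose_bound}; the constant $\tfrac14$, larger than the naive $\tfrac2{12}=\tfrac16$, leaves room to absorb the discarded bilinear term and the contribution of the exceptional event.

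Finally, for the distributional statements the same concentration shows $V\leqslant 2\,\mathbb{E}[V]=2\var{\Delta E_b}$ with high probability; since conditionally $\Delta E_b$ is sub-Gaussian with proxy $V$, the unconditional law is sub-Gaussian with variance proxy $2\var{\Delta E_b}$, and symmetry is inherited from the symmetry of the residuals $\Delta\bar Z_i^{(j)}$ together with the sign symmetry of the Gaussian data. The tail bound \eqref{eq:sbfp_bound} is then the standard sub-Gaussian estimate $\mathbb{P}[|W|\geqslant t]\leqslant 2\exp(-t^2/(2\tau^2))$ with $\tau^2=2\var{\Delta E_b}$. The principal obstacle throughout is precisely this discontinuity of the power-of-two scale: the argument must be organized so that only the monotone one-sided inequality $\overline{Y/\alpha}\leqslant\overline{\sigma\sqrt{L}/2^{p-1}}$ is invoked, and the regime $1\ll\ln n\ll 2^{p}$ is what simultaneously validates the extreme-value approximation for $Y$ and renders the mantissa-rounding corrections negligible.
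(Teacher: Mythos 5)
Your proposal is correct at the paper's own level of rigor and reaches the stated bound, but it follows a genuinely different route. The paper's proof is literally ``the SBFP proof with Lemma \ref{lem:fixed_y_bound} replaced by Lemma \ref{lem:fixed_y_bound_bfp}'': it conditions only on the scales $(Y^{(1)},Y^{(2)})$, so the elementary summands are products $X_i^{(3-j)}\Delta Z_i^{(j)}$ of a Gaussian and an independent uniform, and its most technical step is showing (via moment computations, Stirling, and the central binomial coefficient) that such a product is sub-Gaussian with variance proxy $\sigma^2/4$; it also carries the bilinear term $\Delta Z_i^{(1)}\Delta Z_i^{(2)}$ along, with proxy $1/2^7$, and discards it only at the end using $\ln n \ll 2^p$. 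You instead condition on all the data, so only the uniform residuals stay random (proxy $1/12$, which you correctly identify as achievable for the uniform law, beating Hoeffding's $1/4$); this removes the need for the Gaussian-times-uniform lemma entirely, but obliges you to invoke $\chi^2$ concentration of $\sum_i (X_i^{(j)})^2$ around $n\sigma^2$ --- a tool the paper never uses --- and you drop the bilinear term at the outset, absorbing it into the slack between your natural constant $1/6$ and the stated $1/4$. Where your write-up is arguably stronger than the paper: the paper's ``verbatim'' reduction plugs the Gumbel mean of $Y_n$ into the discontinuous map $y \mapsto 2^{2\lceil\log_2(y/\alpha)\rceil}$, tacitly assuming that Dirac-delta convergence justifies this despite the jumps; your use of the monotonicity of the power-of-two rounding together with a one-sided tail bound addresses exactly this BFP-specific difficulty, which the paper glosses over. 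One caveat: at the exact threshold $Y \leqslant \sigma\sqrt{L}$ the exceedance probability is $\Theta(1)$, not negligible (the Gumbel fluctuations above the mean are of order $\sqrt{L}/\ln n$, and $Y$ exceeds its location parameter with limiting probability $1-e^{-1}$), so your ``overwhelming probability'' claim needs a $(1+o(1))$ inflation of the threshold; since that inflation sits inside a ceiling function it can shift the bound by a factor of up to $4$ when $\sigma\sqrt{L}/2^{p-1}$ is just below a power of two --- but this is precisely the same slack hidden in the paper's plug-in-the-mean step, so it does not put your argument below the paper's standard of rigor.
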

\begin{proof}
The proof can be found in the Appendix.
\end{proof}

A few notes are in place here. First, we see that the behavior of the bounds as functions of the mantissas precision are similar, namely both decrease with $p_j$ exponentially. This observation is confirmed by extensive numerical experiments in Section \ref{sec:num_res} below. Second, the dependency of the variance on the block size $n$ can be broken into two multiplicative factors: 1) the linear dependency on $n$ and 2) the logarithmic dependency on $n$. The linear dependency is due to the proportional growth of the variance of the error with the length of the vectors being multiplied. Indeed, summing up more i.i.d. variables naturally leads to the linear grows of the variance. The logarithmic multiplier is more surprising. In fact, it comes from the distribution of the maximum of Gaussian random variables as explained in the Appendix section. Here we only mention that given $n$ i.i.d. standard normal random variables, the expected maximum of the absolute values of these variables grows approximately as $\sqrt{2\log{n}}$. Since the variance is proportional to the square of the expected block scales - as explained in (\ref{eq:sc_error}) above and Lemma \ref{lem:fixed_y_bound} below - its growth with $n$ becomes logarithmic. All these observations are also verified by the numerical experiments of Section \ref{sec:num_asymp}. Note also that sub-Gaussianity of the errors imply their tight concentration around the mean. This implies that the obtained bounds, although asymptotic, are quite tight.

It is important to emphasize the presence of the ceiling function in the expression (\ref{eq:var_bfp_loose_bound}) of the BFP bound. This leads to jumps in the values of the bound at those block sizes, at which the expected values of the scales $\frac{Y_n}{\alpha}$ pass over integer powers of two. A more detailed discussion supported by numerical simulations can be found in Section \ref{sec:num_asymp} below. 

\section{High-dimensional Bounds}
\label{sec:highdim}
Numerical simulations of Section \ref{sec:num_asymp} show that for finite block sizes $n$ the bound obtained in Proposition \ref{prop:bfp_bound_asymp} becomes quite loose. In this section, we provide finite-dimensional improvements of the variance bounds from Propositions \ref{prop:sbfp_bound_asymp} and \ref{prop:bfp_bound_asymp}. Such improvements however come at a price. Here we already cannot solve the result by quadrature and are forced to use numerical integration for our simulations in Section \ref{sec:num_res}.

\begin{lemma}
\label{lem:fin_dim_distr}
Let $\Phi$ and $\phi$ denote the cdf and pdf of the standard normal distribution, respectively, and for some $\sigma^2>0$ $X_i \sim \mathcal{N}(0, \sigma^2),\; i=1,\dots,n$ be independent. Then the distribution of the random variable
\begin{equation}
Y = \max_{i=1}^n |X_i|
\end{equation}
reads as
\begin{equation}
\label{eq:density_phi}
f_n(y) = f_{Y,n}(y) = \frac{2n}{\sigma}\phi\(\frac{y}{\sigma}\)\[2\Phi\(\frac{y}{\sigma}\) - 1\]^{n-1}.
\end{equation}
\end{lemma}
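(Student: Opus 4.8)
The plan is to follow the standard route for the distribution of the maximum of independent random variables: first obtain the cumulative distribution function (cdf) of $Y$ in closed form, and then differentiate to recover the density.

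First I would observe that the event $\{Y \leqslant y\}$ coincides with the event $\{|X_i| \leqslant y \text{ for all } i\}$, since the maximum of the absolute values is at most $y$ exactly when every individual absolute value is. Using the assumed independence of the $X_i$, the cdf of $Y$ then factorizes, for $y \geqslant 0$, as
\begin{equation*}
F_n(y) = \mathbb{P}(Y \leqslant y) = \prod_{i=1}^n \mathbb{P}(|X_i| \leqslant y) = \big[\mathbb{P}(|X_1| \leqslant y)\big]^n,
\end{equation*}
while $F_n(y) = 0$ for $y < 0$.

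Next I would evaluate the single-variable factor. Writing $X_1 = \sigma G$ with $G$ standard normal gives $\mathbb{P}(|X_1| \leqslant y) = \Phi(y/\sigma) - \Phi(-y/\sigma)$, and the symmetry identity $\Phi(-t) = 1 - \Phi(t)$ collapses this to $2\Phi(y/\sigma) - 1$. Substituting back yields the cdf $F_n(y) = [2\Phi(y/\sigma) - 1]^n$ on $y \geqslant 0$.

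Finally, differentiating in $y$ by the chain rule, using $\Phi'(t) = \phi(t)$ together with the inner derivative $1/\sigma$ of the argument $y/\sigma$, produces
\begin{equation*}
f_n(y) = n\big[2\Phi(y/\sigma) - 1\big]^{n-1}\cdot \frac{2}{\sigma}\,\phi(y/\sigma),
\end{equation*}
which is exactly the claimed density (\ref{eq:density_phi}). The argument is elementary throughout; the only step demanding even minor care is the symmetry reduction of the two-sided probability $\mathbb{P}(|X_1| \leqslant y)$, so I do not anticipate any genuine obstacle here.
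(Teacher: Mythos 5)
Your proposal is correct and follows essentially the same route as the paper's own proof: factor the cdf of $Y$ via independence, reduce $\mathbb{P}(|X_1| \leqslant y)$ to $2\Phi(y/\sigma)-1$ by normal symmetry, and differentiate to obtain the density. The only difference is that you spell out the symmetry reduction explicitly, which the paper leaves implicit.
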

\begin{proof}
For independence,
\begin{equation}
\mathbb{P}\[Y\leqslant y\] = \prod_i \mathbb{P}\[|X_i| \leqslant y\] = \prod_i \mathbb{P}\[ -y \leqslant X_i \leqslant y\].
\end{equation}
Since each $X_i$ is normal and $y$ is obviously non-negative, we obtain
\begin{equation}
\mathbb{P}\[Y\leqslant y\] = \[2\Phi\(\frac{y}{\sigma}\) - 1\]^n.
\end{equation}
Now take the first derivative to conclude the proof. 
\end{proof}

\begin{prop}[High-dimensional variance of SBFP Quantization]
\label{prop:hd_sbfp_bound}
If $1 \ll \ln\, n \ll \max_j 2^{p_j}$, the variance of the error of the inner product of SBFP-quantized vectors can be bounded by
\begin{align}
\label{eq:var_sbfp_hd_bound}
&\var{\Delta E_s} \nonumber \\
&\leqslant \frac{n\sigma^4}{8} \int \[\(\frac{y_1}{\alpha_1}\)^2 + \(\frac{y_2}{\alpha_2}\)^2\] f_n(y_1)f_n(y_2)dy_1dy_2.
\end{align}
\end{prop}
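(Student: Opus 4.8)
The plan is to condition on the two block scales $Y^{(1)}$ and $Y^{(2)}$, bound the variance of $\Delta E_s$ at fixed scales, and then average this conditional bound against the exact law of the two maxima furnished by Lemma~\ref{lem:fin_dim_distr}. First I would rewrite the error in \eqref{eq:sc_error} through the per-coordinate rounding errors $\Delta Z_i^{(j)}$. Using $X_i^{(j)}=\tfrac{Y^{(j)}}{\alpha_j}Z_i^{(j)}$ together with $\mathcal{I}\[Z_i^{(j)}\]=Z_i^{(j)}-\Delta Z_i^{(j)}$ and discarding the second-order product $\Delta Z_i^{(1)}\Delta Z_i^{(2)}$ gives
\begin{equation}
\Delta E_s\approx\sum_i\[\frac{Y^{(2)}}{\alpha_2}X_i^{(1)}\Delta Z_i^{(2)}+\frac{Y^{(1)}}{\alpha_1}X_i^{(2)}\Delta Z_i^{(1)}\],
\end{equation}
which exposes the two scales as multiplicative factors and reduces everything to a sum of conditionally independent coordinates.

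Second, I would establish the fixed-scale variance bound — the estimate isolated in Lemma~\ref{lem:fixed_y_bound} and already implicit in the proof of Proposition~\ref{prop:sbfp_bound_asymp}. Conditioning on $Y^{(1)}=y_1$ and $Y^{(2)}=y_2$ and adopting the idealized quantization-noise model valid in the regime $\ln n\ll\max_j 2^{p_j}$ — the $\Delta Z_i^{(j)}$ are zero-mean, mutually independent, independent of the data, supported in $\[-\tfrac12,\tfrac12\]$, and identically zero at the maximizing index — renders the summands conditionally independent with zero mean. The conditional variance is then the sum of the per-coordinate variances, each proportional to one of $(y_1/\alpha_1)^2$ or $(y_2/\alpha_2)^2$, multiplied by a coordinate second moment $\mathbb{E}\[(X_i)^2\mid Y\]\leqslant\sigma^2$ and the variance of the bounded rounding noise. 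Summing over the $n$ coordinates reproduces exactly the $n$, $(y_j/\alpha_j)^2$, and constant dependence of the integrand in \eqref{eq:var_sbfp_hd_bound}.

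Third, I would pass from the conditional to the unconditional variance by the law of total variance,
\begin{equation}
\var{\Delta E_s}=\mathbb{E}\[\var{\Delta E_s\mid Y^{(1)},Y^{(2)}}\]+\var{\mathbb{E}\[\Delta E_s\mid Y^{(1)},Y^{(2)}\]}.
\end{equation}
Because the rounding errors are symmetric and zero-mean, $\mathbb{E}\[\Delta E_s\mid Y^{(1)},Y^{(2)}\]=0$ (the discarded second-order term also has vanishing conditional mean), so the second summand drops and only the expected conditional variance survives. Since the two input vectors are independent, $Y^{(1)}$ and $Y^{(2)}$ are independent, each with density $f_n$ by Lemma~\ref{lem:fin_dim_distr}; hence the expectation factorizes into the double integral against $f_n(y_1)f_n(y_2)$, which is precisely the right-hand side of \eqref{eq:var_sbfp_hd_bound}. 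In contrast to Proposition~\ref{prop:sbfp_bound_asymp}, the hypothesis $\ln n\ll\max_j 2^{p_j}$ is now needed only to license the noise model, not to control the maximum, whose exact distribution is retained through the integral.

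The step I expect to be the main obstacle is the fixed-scale bound of the second paragraph, not the averaging of the third. Making it rigorous means replacing the deterministic, value-dependent rounding errors by the idealized uniform and independent noise, quantifying the error incurred by both this substitution and the removal of the $\Delta Z_i^{(1)}\Delta Z_i^{(2)}$ term, and controlling the effect of conditioning on the block maximum, which truncates and weakly correlates the surviving coordinates and is what keeps the second-moment factor below $\sigma^2$ and turns the computation into an inequality rather than an identity. Once that conditional estimate is secured, the tower-property step and the factorization over the two independent maxima are routine.
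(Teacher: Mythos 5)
Your proposal is correct, and its skeleton is the paper's own: condition on the two block scales, control $\Delta E_s$ at fixed scales through the per-coordinate decomposition (the content of Lemma \ref{lem:fixed_y_bound}), then average against the product density $f_n(y_1)f_n(y_2)$ of the two independent maxima from Lemma \ref{lem:fin_dim_distr}. The one genuine difference is how the unconditional variance is extracted. The paper never invokes the law of total variance: it stays inside the sub-Gaussian framework of Proposition \ref{prop:sbfp_bound_asymp}, applies the law of total expectation to the conditional moment generating function, bounds $\mathbb{E}\left[e^{s\Delta E_s}\right]$ by $\int \exp\left(\sigma^2_{\Delta E_s\,|\,y_1,y_2}s^2/2\right)f_n(y_1)f_n(y_2)\,dy_1dy_2$, and reads the variance bound off the quadratic term in $s$; the constant $1/8$ is half the per-coordinate variance proxy $\sigma^2/4$ established in Lemma \ref{lem:fixed_y_bound}. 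Your direct conditional-variance route is more elementary and even slightly sharper --- the uniform rounding noise contributes variance $1/12$ per coordinate, so you obtain a constant $1/12\leqslant 1/8$ --- and it suffices here because, unlike Propositions \ref{prop:sbfp_bound_asymp} and \ref{prop:bfp_bound_asymp}, this proposition claims no tail bound, which is the only extra payoff of the MGF machinery. The two obstacles you flag are resolved in the paper as follows: the cross term $\Delta Z_i^{(1)}\Delta Z_i^{(2)}$ is not discarded up front but kept as a sub-Gaussian summand with proxy $2^{-7}$ and dropped only at the end, where the hypothesis $\ln n \ll \max_j 2^{p_j}$ makes its contribution $(n-2)[Y^{(1)}Y^{(2)}]^2/(2^7\alpha_1^2\alpha_2^2)$ negligible against the two main terms (your early truncation needs the same hypothesis and the same zero-conditional-mean observation, so nothing is lost); and the dependence induced by conditioning on the maxima is handled by fixing the argmax indices, which renders the summands $D_i$ conditionally independent --- though the paper silently treats the surviving coordinates as exact normals rather than truncated ones, a point your write-up handles more carefully (truncation of a centered normal to a symmetric interval only lowers its second moment, so the inequality is safe).
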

\begin{proof}
The proof can be found in the Appendix.
\end{proof}

A natural extension of this result to the BFP format reads as follows.

\begin{prop}[High-dimensional variance of BFP Quantization]
\label{prop:hd_bfp_bound}
If $p_1=p_2=p$ and $1 \ll \ln\, n \ll 2^p$, the variance of the error of the inner product of BFP-quantized vectors can be bounded by
\begin{align}
\label{eq:var_bfp_hd_bound}
&\var{\Delta E_b} \\
&\leqslant \frac{n\sigma^2}{8} \int \[2^{2\ceil*{\log_2 \frac{\sigma y_1}{\alpha_1}}} + 2^{2\ceil*{\log_2 \frac{\sigma y_2}{\alpha_2}}}\] f_n(y_1)f_n(y_2)dy_1dy_2. \nonumber
\end{align}
\end{prop}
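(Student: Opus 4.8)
The plan is to reuse the machinery of Proposition~\ref{prop:hd_sbfp_bound} almost verbatim, since the passage from SBFP to BFP changes only the scale factor. First I would record the BFP analogue of the SBFP error decomposition. Writing the power-of-two scale as $\bar{s}_j = 2^{\ceil*{\log_2 \frac{Y^{(j)}}{\alpha_j}}}$, rescaling $Z_i^{(j)} = X_i^{(j)}/\bar{s}_j$, and letting $\delta_i^{(j)} = \bar{s}_j\big(Z_i^{(j)} - \mathcal{I}[Z_i^{(j)}]\big)$ be the reconstruction error of the $i$-th entry, the very same cancellation that produces the SBFP formula gives
\begin{equation*}
\Delta E_b = \sum_i\left(X_i^{(1)}\delta_i^{(2)} + X_i^{(2)}\delta_i^{(1)} - \delta_i^{(1)}\delta_i^{(2)}\right),
\end{equation*}
so that the only object differing from the SBFP case is the scale $\bar{s}_j$ (a power of two) in place of $Y^{(j)}/\alpha_j$.

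Next I would condition on the block maxima. Fixing $Y^{(1)}$ and $Y^{(2)}$ freezes $\bar{s}_1,\bar{s}_2$, and under the rounding-noise model of Section~\ref{sec:stat_model}---the errors $Z_i^{(j)} - \mathcal{I}[Z_i^{(j)}]$ treated as i.i.d.\ uniform on $[-\tfrac12,\tfrac12]$ and independent of the data---every cross-covariance and the conditional mean vanish. Hence, by the law of total variance, $\var{\Delta E_b} = \mathbb{E}\big[\var{\Delta E_b \mid Y^{(1)},Y^{(2)}}\big]$, and the conditional variance splits into $n$ independent contributions. Dropping the higher-order product term $\delta_i^{(1)}\delta_i^{(2)}$ and bounding the truncated second moment $\mathbb{E}[(X_i^{(j)})^2 \mid Y^{(j)}] \leqslant \sigma^2$ (conditioning on the maximum only shrinks it), each contribution is controlled by $\sigma^2(\bar{s}_1^2 + \bar{s}_2^2)$ times the same universal constant as in Proposition~\ref{prop:hd_sbfp_bound}, giving $\var{\Delta E_b \mid Y^{(1)},Y^{(2)}} \leqslant \tfrac{n\sigma^2}{8}(\bar{s}_1^2 + \bar{s}_2^2)$.

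Finally I would integrate against the law of the maxima. Writing $Y^{(j)} = \sigma y_j$ with $y_j$ the maximum of $n$ standard normals (the $\sigma=1$ instance of Lemma~\ref{lem:fin_dim_distr}) and using independence of the two blocks, the joint density factorizes as $f_n(y_1)f_n(y_2)$; substituting $\bar{s}_j^2 = 2^{2\ceil*{\log_2 \frac{\sigma y_j}{\alpha_j}}}$ reproduces exactly the integrand of~(\ref{eq:var_bfp_hd_bound}). It is worth noting the bookkeeping of the variance factor: in SBFP the scale obeys $(Y^{(j)}/\alpha_j)^2 = \sigma^2 (y_j/\alpha_j)^2$, so one power of $\sigma^2$ is pulled in front to yield the $\sigma^4$ prefactor there, whereas here $\sigma$ is trapped inside the ceiling and must remain in the integrand, leaving the lone $\sigma^2$ that produces the $n\sigma^2/8$ prefactor.

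The algebra is routine; the real difficulty lies in the conditioning step. One must argue that the deterministic maximizing index $i_*^{(j)}$, whose own rounding error is zero but which carries the large value $\pm Y^{(j)}$, contributes only an $O(\ln n)$ correction negligible against the $O(n)$ bulk; that conditioning on $Y^{(j)}$ leaves the remaining coordinates close enough to i.i.d.\ $\mathcal{N}(0,\sigma^2)$ for the second-moment bound and the noise independence to hold to leading order in $p_j$; and that matching the precise constant $\tfrac18$ requires the same sub-Gaussian bookkeeping used for SBFP. A final, softer obstacle is that the ceiling makes $\bar{s}_j^2$ a piecewise-constant, non-smooth function of $y_j$, so---unlike the asymptotic bound of Proposition~\ref{prop:bfp_bound_asymp}---the resulting integral has no closed form and must be evaluated by numerical integration, exactly the trade-off flagged at the start of Section~\ref{sec:highdim}.
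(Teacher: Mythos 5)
Your proposal is correct and follows essentially the same route as the paper: condition on the block maxima, replace the SBFP scales $Y^{(j)}/\alpha_j$ by the power-of-two scales $2^{\lceil \log_2 (Y^{(j)}/\alpha_j)\rceil}$ in the per-index bounds (the paper's Lemma~\ref{lem:fixed_y_bound_bfp}), discard the rounding-noise product term as negligible in the regime $1 \ll \ln n \ll 2^p$, and integrate the conditional bound against $f_n(y_1)f_n(y_2)$ from Lemma~\ref{lem:fin_dim_distr}. Your only deviation is harmless (arguably cleaner): you apply the law of total variance directly, whereas the paper routes through the conditional moment generating function via the law of total expectation and then converts the integrated sub-Gaussian variance proxy into a variance bound; since the proposition claims only a variance bound, both devices reach the same conclusion.
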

\begin{proof}
The proof can be found in the Appendix.
\end{proof}

The bounds of equations (\ref{eq:var_sbfp_hd_bound}) and (\ref{eq:var_bfp_hd_bound}) are already harder to analyze through pure observation than (\ref{eq:var_sbfp_loose_bound}) and (\ref{eq:var_bfp_loose_bound}) due to the involved integrals which cannot be solved by quadrature. To get a better understanding of the performance characteristics of the high-dimensional results of this section, we resort to numerical simulations in Section \ref{sec:num_hd_b}.

\section{Block Format Performance}
\label{sec:optimal_size}
Block quantization techniques are becoming extremely popular in all modern machine learning applications requiring storage, communication, and arithmetic processing of tensors of enormous sizes. All block formats store the mantissas of the elements as integers and differ mainly by the format of the scale.

Naturally, SBFP is the most accurate block format since it enables storage of the scale in full precision. As a consequence, all other block formats would usually have larger errors making SBFP format a suitable benchmark for measuring the relative performance quality of any other block format with respect to it. Clearly, the performance of every block format may be different on different datasets. To introduce a universal performance measure of a format, we need to evaluate it on well-specified universal populations. In this study, we use normal centered populations for such evaluation for the reasons listed in Section \ref{sec:stat_model}.

\subsection{Relative Block Format Accuracy Measure}
Based on the discussion above, we propose the following RElative Block format ACcuracy (REBAC) measure for any block format F,
\begin{equation}
\rho_m(F) = \frac{m_{F}}{m_{SBFP}},
\end{equation}
where $m$ could be any statistic evaluating the quantization quality of a block format. For instance, in this article we suggest using the variance of the scalar product quantization error in normal populations. Thus, our REBAC measure reads as
\begin{equation}
\label{eq:rho_var}
\rho_{\text{var}}(F) = \frac{\var{\Delta E_f}}{\var{\Delta E_s}},
\end{equation}
where $\Delta E_f$ is the error induced by quantizing into the F format and the original underlying block values are normally distributed. Next we show the practical importance of the REBAC measure.

\begin{figure}
\centering
\includegraphics[width=1.0\linewidth]{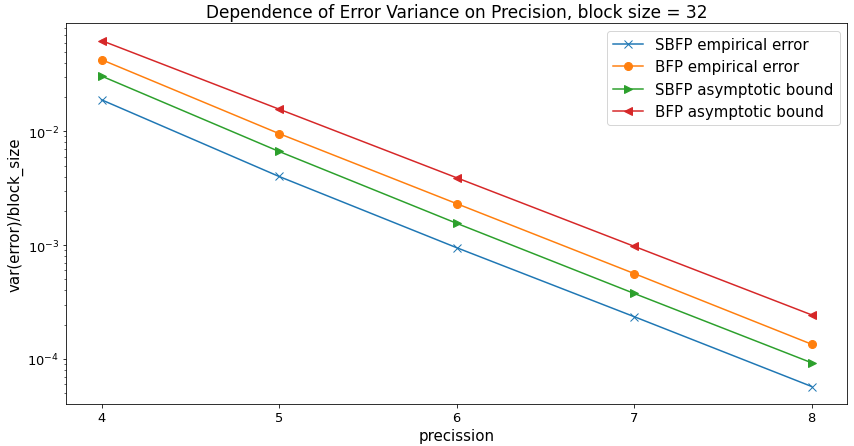}
\caption{Asymptotic bounds as functions of precision.}
\label{fig:var_p_asymp}
\end{figure}

\subsection{Optimal Block Size}
Given a block format, the main problems faced by the practitioners are the choice of the mantissa bit width and the block size. The choice of the mantissa bit width is simplified by an easy scaling rule. As Figure \ref{fig:var_p_asymp} suggests, each extra bit of precision causes approximately $6$dB decrease in the error variance. 

The choice of the block size is somewhat more complicated. Here, the REBAC ratio introduced above provides a natural framework for the optimal selection of the block size. Indeed, if we consider the behavior of the REBAC ratio as a function of the block size, we would often be able to select the block size at which a minimum is attained. Section \ref{sec:opt_choice_num} compares the theoretically optimal choices of the block sizes for various precisions with their empirical values established for the weights of GPT2-XL network \cite{radford2019language}.

\section{Numerical Simulations}
\label{sec:num_res}
In this section, we support our theoretical findings by empirical studies. First, we verify that the bounds obtained in Propositions \ref{prop:sbfp_bound_asymp}, \ref{prop:bfp_bound_asymp}, \ref{prop:hd_sbfp_bound} and \ref{prop:hd_bfp_bound} indeed apply to normally distributed weights. More importantly, we demonstrate the practical benefits of our bounds and conclusions by applying them to the weights of a publicly available pretrained neural network. In this article we use GPT2-XL available at \cite{radford2019language} for this purpose.


\subsection{Asymptotic Bounds}
\label{sec:num_asymp}
Let us first examine the asymptotic bounds developed in Section \ref{sec:asymp}. Figure \ref{fig:var_n_asymp} compares both the SBFP and BFP bounds from Propositions \ref{prop:sbfp_bound_asymp} and \ref{prop:bfp_bound_asymp} to the empirical variances of the inner product quantization errors. To make the visual comparison easier here and everywhere below we divide both the errors and the bounds by the block size. A number of observations are in place here. First, we see that the SBFP bound becomes quite tight already around the block size of $n=16$ despite the asymptotic nature of the former. Second, we can observe the jumps of the BFP bound naturally arising at those block sizes where the expected values of the block scales pass over the powers of two. Due to the asymptotic nature of this bound we do not expect it to upper limit the empirical error everywhere but we can already see that it adequately describes the general behavior of the empirical error curve.

%

To better understand the behavior of the asymptotic bounds in Figure \ref{fig:var_n_asymp} let us examine Figure \ref{fig:conf_ribbon_4} for a moment. It demonstrates the behavior of the factor $\frac{Y_n}{\alpha}$ as a function of the block size $n$ for a fixed value of $\alpha = 2^{p-1}-1$. More specifically, for each $n$ the graph features the expected value of $\frac{Y_n}{\alpha}$ in dark blue and the confidence interval of plus/minus standard deviation around it. Let us concentrate on the points of intersection of the middle dark blue curve with the horizontal lines drawn at the integer powers of two. We see that in the given range of values we have two such intersections, one around $n=8$ and one around $n=1024$. These are exactly the points at which the asymptotic BFP bound jumps in Figure \ref{fig:var_n_asymp}. Indeed, as Figure \ref{fig:conf_ribbon_4} suggests it is these values of the block size at which the expected block scales pass over a power of two. As a consequence, any larger block size would require usage of a larger (next integer) exponent for the corresponding typical block, leading to a significant drop of quantization accuracy right after that value of $n$. This explains the sharp jumps in Figure \ref{fig:var_n_asymp}. Figure \ref{fig:conf_ribbon_468} shows three confidence ribbons for different values of the precision parameter. From it we can infer that for larger values of $p$ and $\alpha$ the corresponding jumps of the asymptotic BFP bounds would shift to the right. This observation is confirmed by Figure \ref{fig:var_n_asymp_bfp_p} illustrating the behavior of the BFP asymptotic error bound for the same set of precision values $p$.

Figure \ref{fig:var_p_asymp} shows that the dependence on the precision $p$ is also very well captured by the bounds. Note that for all formats each additional bit of precision lowers error variance by factor of $4$. Here and below we always set the precisions of the multiplied vectors to be equal $p_1=p_2$.

\begin{figure*}
\centering
\makebox[\textwidth]
{
\begin{subfigure}{.5\textwidth}
  \centering
  \includegraphics[width=1.0\linewidth]{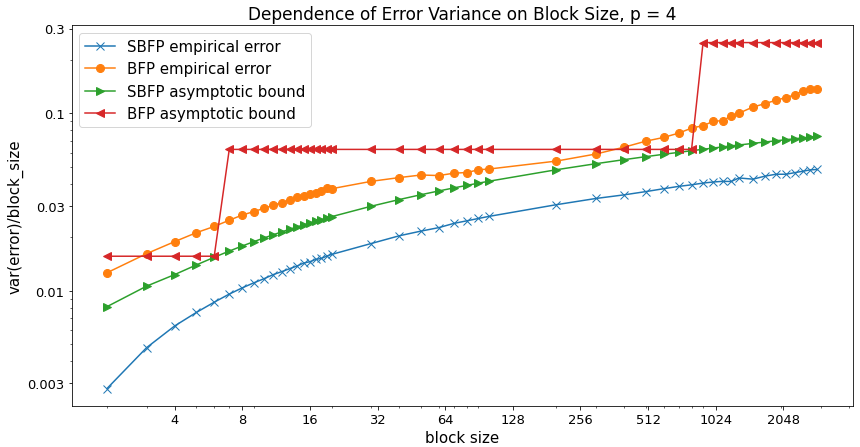}
  \caption{Asymptotic bounds as functions of block size.}
  \label{fig:var_n_asymp}
\end{subfigure}%
\begin{subfigure}{0.5\textwidth}
  \centering
  \includegraphics[width=1.0\linewidth]{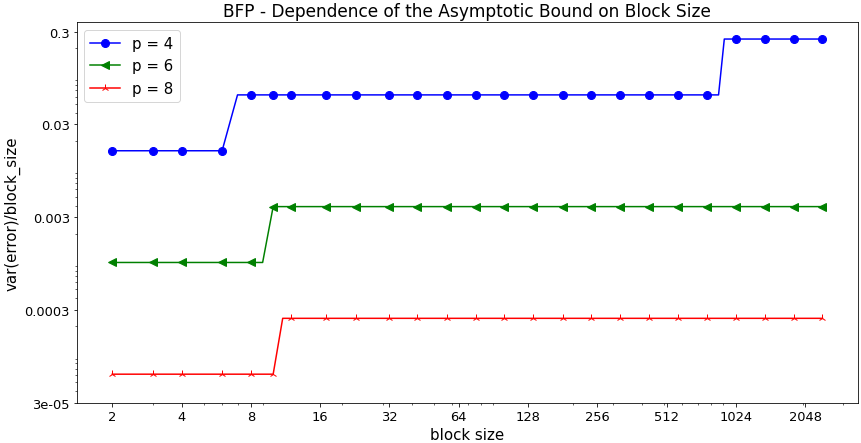}
  \caption{BFP asymptotic bounds as functions of block size.}
  \label{fig:var_n_asymp_bfp_p}
\end{subfigure}%
} \par
\makebox[\textwidth]
{
\begin{subfigure}{.5\textwidth}
  \centering
  \includegraphics[width=1.0\linewidth]{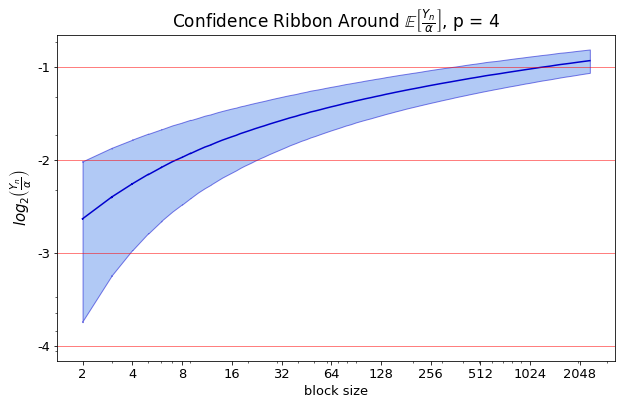}
  \caption{Confidence ribbon for $p=4$.}
  \label{fig:conf_ribbon_4}
\end{subfigure}
\begin{subfigure}{0.5\textwidth}
  \centering
  \includegraphics[width=1.0\linewidth]{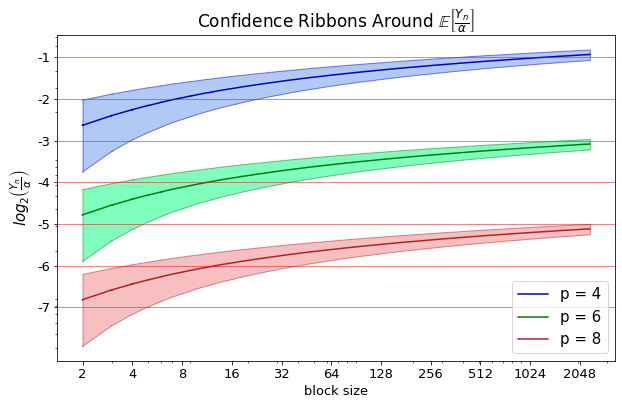}
  \caption{Comparison of confidence ribbons for $p=4,6,8$.}
  \label{fig:conf_ribbon_468}
\end{subfigure}
} \par
\caption{Theoretical error bounds versus the empirical error variances in quantized normal populations and GPT2-XL weights.}
\label{fig:test}
\end{figure*}

\subsection{High-dimensional Bounds}
\label{sec:num_hd_b}
In this section we inspect the improved high-dimensional bounds obtained in Propositions \ref{prop:hd_sbfp_bound} and \ref{prop:hd_bfp_bound}. As discussed in Section \ref{sec:highdim}, here we no longer can get analytical expressions for the bounds but are forced to resort to numerical integration. Figures \ref{fig:hd_sbfp} and \ref{fig:hd_bfp} compare the empirical errors to both asymptotic and high-dimensional bounds for SBFP and BFP formats, respectively. We can see that already for moderate values of $n$ the bounds explain the behavior of the error variance quite well.


Comparing Figures \ref{fig:hd_sbfp} and \ref{fig:hd_bfp} we can easily observe different characteristic behavior of the bounding curves. Indeed, the BFP curve changes its curvature and its second derivative clearly passes through zero, while the SBFP curve is always concave. This interesting feature can be easily explained if we recall the asymptotic bounds in Figure \ref{fig:var_n_asymp} reproduced here for convenience. Indeed, the green curve of Figure \ref{fig:hd_bfp} is trying to smoothly approximate its limiting orange bound with jumps. The nature of the bound stays the same - when the bell-shaped distribution of the block scales passes over an integer power of two, we see the change in the curvature of the bounding curve. For large values of $n$, these two curves will converge to the jumpy asymptotic bound. We can see already in Figure \ref{fig:hd_bfp} that the bounds almost coincide for $n > 4096$, while verifying this further would require simulating the empirical error for the values of $n$ near the next jump. Simple calculations imply that the next jump would happen around such large values of $n$ for which it would be impossible to simulate the empirical error on any reasonable computer.

\subsection{GPT2-XL Weights Quantization}
Next we demonstrate the power of the proposed methodology on the weights of the GPT2-XL network pretraied by the authors of \cite{radford2019language}. This network consists of $48$ decoders whose fully connected (FFN) layers are of dimensions $1600 \times 6400$ and $6400 \times 1600$, respectively. We focus on these layers and for the purpose of our inner product studies, we compute the inner products of the different $6400$-dimensional rows and columns of these matrices (in other words we consider the diagonal of the operator product of the two aforementioned matrices).

Figures \ref{fig:gpt2_sbfp} and \ref{fig:gpt2_bfp} compare the quantization errors of our inner products in every layer to the corresponding SBFP and BFP bounds. The quantization of matrices always happens along the larger dimension. We can observe that the bounds work very well up to certain block sizes after which they start diverging. The reason of this divergence for large block sizes is in part simply because for those blocks we get poor averaging of the errors since we have very few blocks of those large sizes.

\subsection{Optimal Block Size}
\label{sec:opt_choice_num}
In this section we explain, how the framework developed in Section \ref{sec:optimal_size} can be used in practice. Figure \ref{fig:rel_m_theory} demonstrates the theoretical behavior of the ratio of the error variances for BFP and SBFP formats - the values of $\rho_{\text{var}}(\text{BFP})$ introduced by equation (\ref{eq:rho_var}). We see that for different precisions the minima are attained at different block sizes. For example, for $p=4$ the optimal block size lays between $n=64$ and $n=128$ while for higher precisions the optimal size grows larger and reaches $n=512$ for $p=8$. We can also see that our theoretical findings are well supported by the behavior of the REBAC measure in GPT2-XL weights, as shown in Figure \ref{fig:rel_m_gpt2}. Here, we averaged the ratio of the variances over all $48$ decoders.

\begin{figure*}
\centering
\makebox[\textwidth]
{
\begin{subfigure}{.5\textwidth}
  \centering
  \includegraphics[width=1.0\linewidth,keepaspectratio]{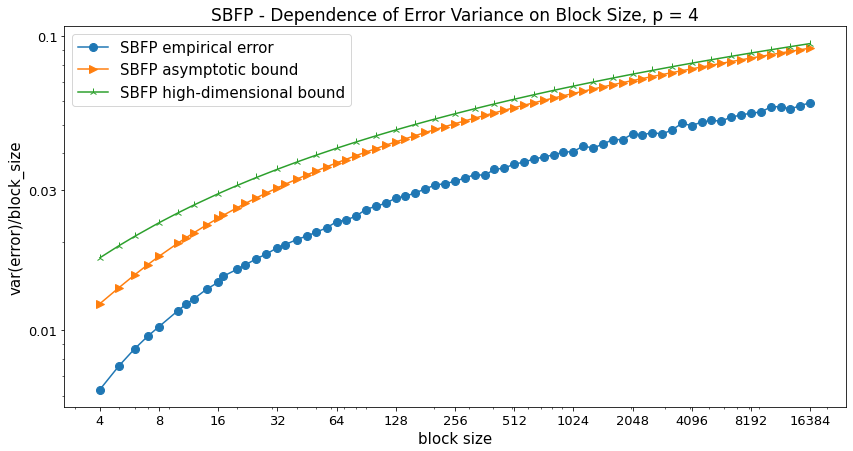}
  \caption{SBFP}
  \label{fig:hd_sbfp}
\end{subfigure}%
\begin{subfigure}{0.5\textwidth}
  \centering
  \includegraphics[width=1.0\linewidth,keepaspectratio]{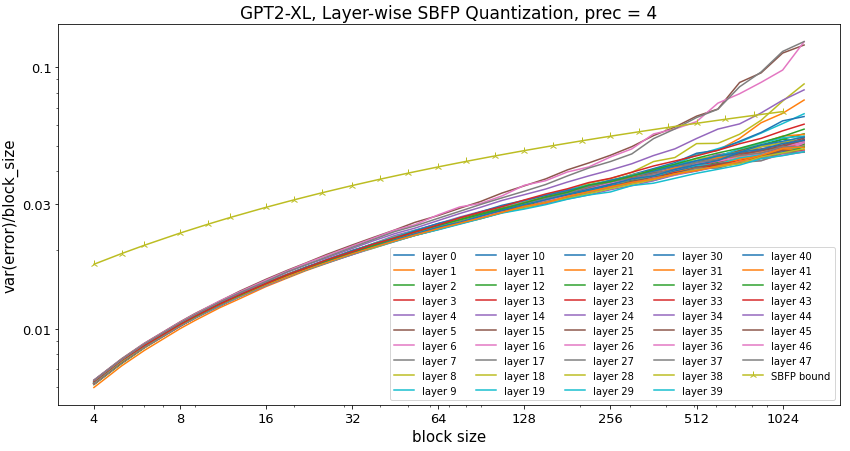}
  \caption{SBFP}
  \label{fig:gpt2_sbfp}
\end{subfigure}%
} \par
\makebox[\textwidth]
{
\begin{subfigure}{.5\textwidth}
  \centering
  \includegraphics[width=1.0\linewidth,keepaspectratio]{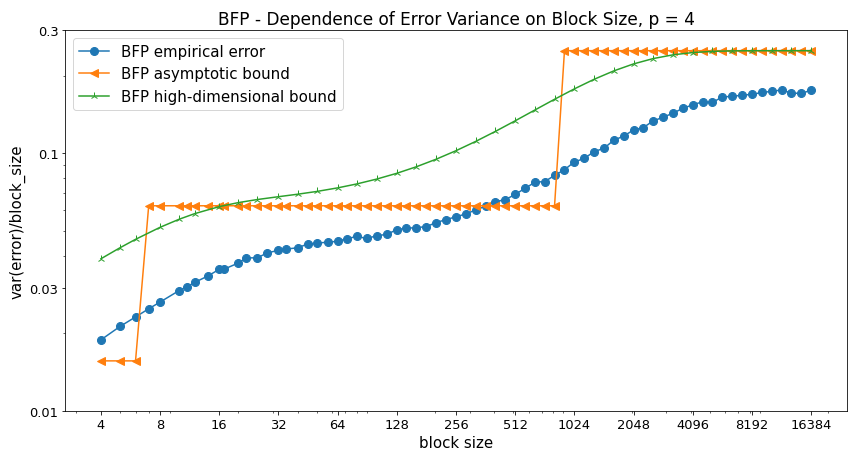}
  \caption{BFP}
  \label{fig:hd_bfp}
\end{subfigure}
\begin{subfigure}{0.5\textwidth}
  \centering
  \includegraphics[width=1.0\linewidth,keepaspectratio]{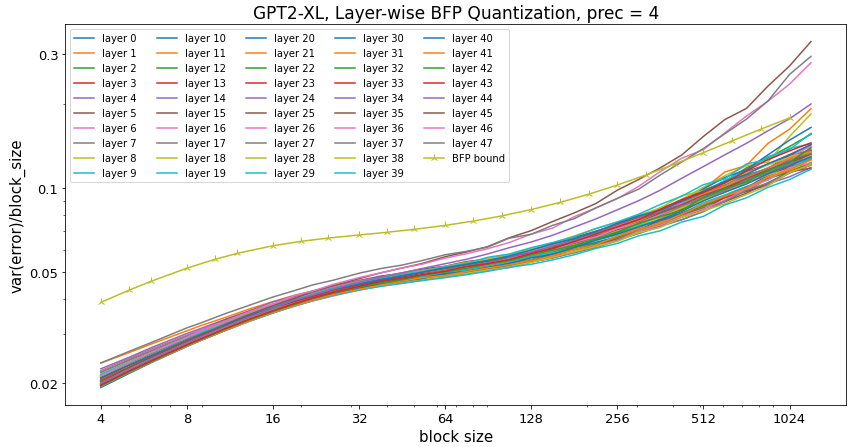}
  \caption{BFP}
  \label{fig:gpt2_bfp}
\end{subfigure}
} \par
\caption{Theoretical error bounds versus the empirical error variances in quantized normal populations and GPT2-XL weights.}
\label{fig:test}
\end{figure*}

Figure \ref{fig:rel_m_theory} shows that the optimal block size grows with the precision and finally saturates (converges) to a finite value around $n=512$. This phenomenon can be explained as follows. Consider the factor of the integrand in equation (\ref{eq:var_bfp_hd_bound}) depending on the block precisions $p_1$ and $p_2$ trough $\alpha_1$ and $\alpha_2$ respectively,
\begin{equation}
2^{2\ceil*{\log_2 \frac{\sigma y_1}{\alpha_1}}} + 2^{2\ceil*{\log_2 \frac{\sigma y_2}{\alpha_2}}}.
\end{equation}
Let us focus on one of the summands. For simplicity of notation consider its logarithm,
\begin{align}
\label{eq:p_mac_exp}
\log_2 & 2^{2\ceil*{\log_2 \frac{\sigma y_j}{\alpha_j}}} = 2\ceil*{\log_2 (\sigma y_j) - \log_2 \alpha_j} \nonumber \\
&= 2\ceil*{\log_2 (\sigma y_j) - \log_2 \[2^{p_j-1}\(1-\frac{1}{2^{p_j-1}}\)\]} \nonumber \\
&= 2\ceil*{\log_2 (\sigma y_j) - (p_j - 1) - \log_2\(1-\frac{1}{2^{p_j-1}}\)} \nonumber \\
&= 2\ceil*{\log_2 (\sigma y_j) + \frac{1}{2^{p_j-1}\ln 2} + o\(\frac{1}{2^{p_j}}\)},
\end{align}
where the last equality follows from the fact that $p_j-1$ is an integer and can be removed from the ceiling function and we used the Maclaurin expansion of the logarithmic function of the form
\begin{equation}
\log_2\(1-x\) = -\frac{x}{\ln 2} + o(x),\;\; x \to 0.
\end{equation}
The last expression in (\ref{eq:p_mac_exp}) shows that for large values of $p_j$, the terms depending on the precision inside the ceiling function vanish. This explains the convergence of the curves in Figure \ref{fig:rel_m_theory} for growing precision to a fixed curve. For our second observation, let us note based on Lemma \ref{lem:gumbel_dist_y} from the Appendix that the values of random variable $y_j$ sharply concentrate around their expected value that grows with $n$ monotonically. Therefore, for smaller values of $n$, the relative added value of $- \log_2\(1-\frac{1}{2^{p_j-1}}\)$ would be more significant when $p_j$ is smaller. This explains the observation that the minima of the REBAC curves are shifting towards smaller block sizes when the precision is reduced.

%

\begin{figure*}
\centering
\begin{subfigure}{.5\textwidth}
  \centering
  \includegraphics[width=1.0\linewidth]{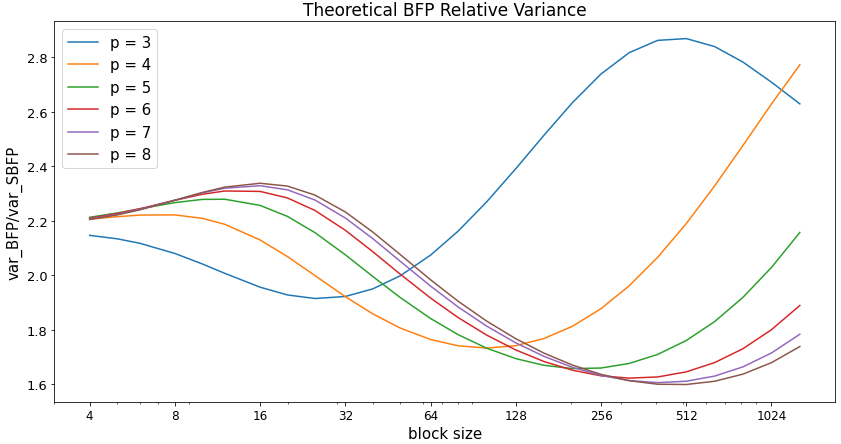}
  \caption{Theoretical behavior of $\rho_{\text{var}}(\text{BFP})$ from (\ref{eq:rho_var}) for Gaussian blocks.}
  \label{fig:rel_m_theory}
\end{subfigure}%
\begin{subfigure}{.5\textwidth}
  \centering
  \includegraphics[width=1.0\linewidth]{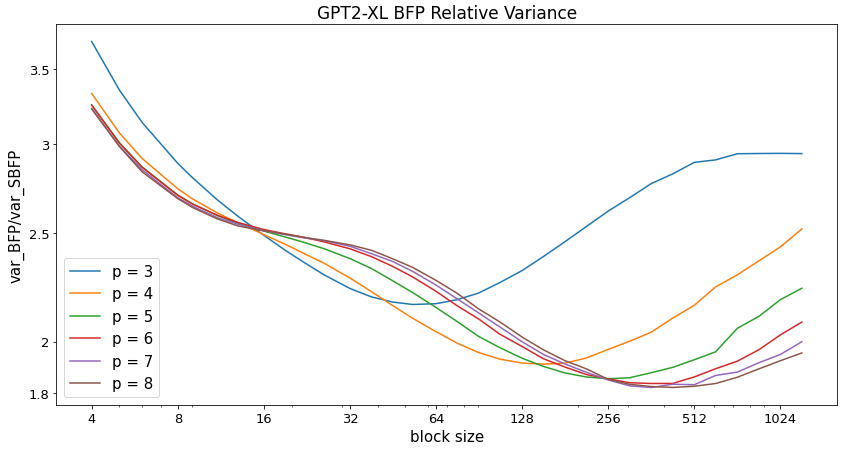}
  \caption{Behavior of $\rho_{\text{var}}(\text{BFP})$ from (\ref{eq:rho_var}) for GPT-XL weights}
  \label{fig:rel_m_gpt2}
\end{subfigure}
\caption{REBAC ratio from equation (\ref{eq:rho_var}) comparing BFP to SBFP as a function of block size for various precisions.}
\end{figure*}

\section{Conclusions}
This article focuses on the rigorous study of block quantization framework that have become one of the main toolboxes enabling modern DNN inference. In particular, we develop asymptotic bounds on the inner product error in SBFP- and BFP-quantized normally distributed vectors and precisely explain the jumps in the values of the latter occurring at the block sizes at which the expected scales pass over the integer powers of two. We then refine those asymptotic bounds to finite dimensional settings and derive high-dimensional tight bounds for the same errors. Here we observe significant differences between the SBFP and BFP related to the aforementioned jumps. In finite-dimentional settings the jumps turn into oscillations around the middle curve, while the SBFP bound is always concave. We next use these observations to introduce a performance measure assessing the typical accuracy of any block format. Such a framework enables us to determine the optimal values of the block format parameters providing the highest possible accuracy. An example of such parameter selection is the block size tuning, which we study in detail. In particular, we show that if the precision of the BFP format is fixed at $4$ bits, the optimal block size becomes $n=64$. All our theoretical derivations are supported by numerical simulations and studies on the weights of publicly available pretrained neural networks.

\section{Appendix}

In the section, we mark the dependence of the distribution of $Y$ on $n$ through the subscript,
\begin{equation}
\label{eq:def_y}
Y_n = \max_{i=1}^n |X_i|.
\end{equation}
Let us also introduce an auxiliary extreme value distribution,
\begin{equation}
\label{eq:def_u}
U_n = \max_{i=1}^n X_i.
\end{equation}
The density of Gumbel distribution with location parameter $\mu$ and scale $\sigma > 0$ reads as
\begin{equation}
\label{eq:gumb_def}
g(x; \mu, \sigma) = \frac{1}{\sigma} e^{-(z+e^{-z})},
\end{equation}
where
\begin{equation}
z = \frac{x-\mu}{\sigma}.
\end{equation}
Let $G$ be Gumbel distributed with parameters $\mu$ and $\sigma$, then its mean and variance read as
\begin{equation}
\mathbb{E}[G] = \mu + \gamma \sigma,
\end{equation}
where $\gamma \approx 0.577$ is the Euler-Mascheroni constant and
\begin{equation}
\var{G} = \frac{\pi^2}{6}\sigma^2.
\end{equation}

\begin{lemma}[Corollary of Fisher-Tippett-Gnedenko Theorem \cite{frechet1927loi, fisher1928limiting, mises1936distribution, gnedenko1943distribution}]
\label{lem:fisher}
Let $X_i \sim \mathcal{N}(0, 1)$ be i.i.d. and $U_n$ be defined by (\ref{eq:def_u}), then its asymptotic distribution is Gumbel with parameters
\begin{equation}
\mu_n^U = \sqrt{\ln\(\frac{n^2}{2\pi\,\ln\(\frac{n^2}{2\pi}\)}\)} + o\(\frac{1}{\sqrt{\ln\, n}}\),\;\; n \to \infty,
\end{equation}
\begin{equation}
\sigma_n^U = \sqrt{\ln\(\frac{n^2}{2\pi\,\ln\(\frac{n^2}{2\pi}\)}\)}\frac{1}{\ln n} + o\(\frac{1}{\sqrt{\ln\, n}}\), n \to \infty.
\end{equation}
\end{lemma}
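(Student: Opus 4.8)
The plan is to treat this as a direct corollary of the Fisher--Tippett--Gnedenko theorem: since $U_n$ is the maximum of i.i.d.\ standard normal variables, it suffices to (i) confirm that the standard normal law lies in the max-domain of attraction of the Gumbel distribution, and (ii) produce explicit normalizing sequences $a_n>0$ and $b_n$ for which $\mathbb{P}\left[(U_n-b_n)/a_n \leqslant x\right] \to \exp(-e^{-x})$, so that $U_n$ is asymptotically Gumbel with location $\mu_n^U=b_n$ and scale $\sigma_n^U=a_n$. For (i) I would invoke von Mises' sufficient condition together with the Mills-ratio tail asymptotic $1-\Phi(x)\sim \phi(x)/x$ as $x\to\infty$; checking that the derivative of $(1-\Phi)/\phi$ tends to zero places the normal in the Gumbel domain.

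The substance of the argument is part (ii). The canonical choice is $b_n$ solving $n\left(1-\Phi(b_n)\right)=1$ together with $a_n=1/\left(n\phi(b_n)\right)$; with these the convergence to $\exp(-e^{-x})$ is classical. First I would insert the tail asymptotic to turn the defining equation into $\frac{n}{b_n\sqrt{2\pi}}\,e^{-b_n^2/2}\approx 1$, and then take logarithms to obtain the transcendental relation
\begin{equation}
b_n^2 = 2\ln n - \ln(2\pi) - \ln\left(b_n^2\right).
\end{equation}

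Next I would solve this by bootstrapping. The leading balance gives $b_n^2\sim 2\ln n$; substituting $\ln(b_n^2)\approx \ln\ln(n^2)$ back into the right-hand side and reorganizing the logarithms yields
\begin{equation}
b_n^2 = \ln\left(\frac{n^2}{2\pi\,\ln(n^2/2\pi)}\right) + o(1),
\end{equation}
so that taking the square root and using $b_n\sim\sqrt{2\ln n}$ gives exactly the claimed $\mu_n^U$ with the stated $o(1/\sqrt{\ln n})$ remainder. For the scale I would use that $n\phi(b_n)\approx b_n$, a direct consequence of $n(1-\Phi(b_n))=1$ combined with the Mills ratio, so that $a_n=1/(n\phi(b_n))\approx 1/b_n$; rewriting $1/b_n$ in terms of $\mu_n^U$ and $\ln n$ then yields the stated scale $\sigma_n^U$ to the same order.

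The hardest step is the asymptotic inversion of the transcendental equation for $b_n$ with the remainder controlled at the level of $o(1/\sqrt{\ln n})$: each bootstrapping substitution injects lower-order corrections (through the constant $\ln(2\pi)$, through replacing $\ln\ln(n^2)$ by $\ln\ln(n^2/2\pi)$, and through the error in the Mills-ratio approximation), and one must verify that all of these are absorbed into the error term rather than surviving at leading order. The final check---that the selected $a_n,b_n$ indeed force $n\bigl(1-\Phi(a_n x+b_n)\bigr)\to e^{-x}$ pointwise in $x$, which is what delivers the Gumbel limit---is routine once the tail asymptotics above are available.
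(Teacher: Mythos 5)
Your overall route is the right one---and in fact it is the only route available, since the paper never proves Lemma \ref{lem:fisher}: it cites it as a classical corollary of the Fisher--Tippett--Gnedenko theorem. Reconstructing it via the von Mises condition, the canonical sequences $b_n$ solving $n(1-\Phi(b_n))=1$ with $a_n = 1/(n\phi(b_n))$, and a bootstrap of the transcendental equation $b_n^2 = 2\ln n - \ln(2\pi) - \ln(b_n^2)$ is exactly the standard argument, and your treatment of the location parameter $\mu_n^U$ (including the observation that replacing $\ln\ln(n^2)$ by $\ln\ln(n^2/2\pi)$ only perturbs $b_n$ at order $(\ln n)^{-3/2}$, hence inside the stated remainder) is correct.

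However, your last step---``rewriting $1/b_n$ in terms of $\mu_n^U$ and $\ln n$ then yields the stated scale $\sigma_n^U$ to the same order''---is a genuine gap: it asserts a match that does not hold. From $b_n^2 \sim 2\ln n$ your derivation gives
\begin{equation}
a_n \approx \frac{1}{b_n} = \frac{b_n}{b_n^2} \approx \frac{\mu_n^U}{2\ln n} = \frac{\mu_n^U}{\ln(n^2)},
\end{equation}
whereas the lemma states $\sigma_n^U \approx \mu_n^U/\ln n$. Both quantities are of order $(\ln n)^{-1/2}$, so this factor of $2$ is a leading-order discrepancy in the scale, not something absorbable into the $o(1/\sqrt{\ln n})$ remainder; by the convergence-of-types theorem, two scale sequences differing by a factor of $2$ cannot both normalize $U_n$ to the same standard Gumbel law. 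Your computation is the one consistent with the classical normal-extremes constants $a_n=(2\ln n)^{-1/2}$ and $b_n=\sqrt{2\ln n}-\frac{\ln\ln n+\ln(4\pi)}{2\sqrt{2\ln n}}$; it is the printed lemma whose scale carries a spurious factor of $2$ (the denominator should read $\ln(n^2)=2\ln n$, and correspondingly $2\ln(2n)$ in Lemma \ref{lem:gumbel_dist_y}). A complete proof should therefore end by flagging and correcting this constant rather than silently claiming agreement. The slip is harmless downstream, since the later proofs use only the mean of $Y_n$ and the fact that its variance vanishes as $n\to\infty$, neither of which changes at leading order.
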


\begin{lemma}
\label{lem:gumbel_dist_y}
Let $X_i \sim \mathcal{N}(0, 1)$ be i.i.d., then the asymptotic distribution of $Y_n$ is Gumbel with parameters
\begin{equation}
\mu_n = \mu_n^Y = \sqrt{\ln\(\frac{2n^2}{\pi\,\ln\(\frac{2n^2}{\pi}\)}\)} + o\(\frac{1}{\sqrt{\ln\, n}}\),\;\; n \to \infty,
\end{equation}
\begin{equation}
\sigma_n = \sigma_n^Y = \sqrt{\ln\(\frac{2n^2}{\pi\,\ln\(\frac{2n^2}{\pi}\)}\)}\frac{1}{\ln(2n)} + o\(\frac{1}{\sqrt{\ln\, n}}\), n \to \infty.
\end{equation}
Therefore, the mean and variance of $Y_n$ are given by
\begin{align}
\label{eq:mean_y}
\mathbb{E}[Y_n] &= \mu_n + \gamma \sigma_n = \sqrt{\ln\(\frac{2n^2}{\pi\,\ln\(\frac{2n^2}{\pi}\)}\)} \nonumber \\
&\times\(1 + \frac{\gamma}{\ln(2n)} + o\(\frac{1}{\ln\, n}\)\),\;\; n \to \infty,
\end{align}
\begin{multline}
\label{eq:var_y}
\var{Y_n} = \frac{\pi^2}{6}\sigma_n^2 = \frac{\pi^2}{6} \ln\(\frac{2n^2}{\pi\,\ln\(\frac{2n^2}{\pi}\)}\)\frac{1}{\log^2(2n)} \\ + o\(\frac{1}{\ln\, n}\),\;\; n \to \infty.
\end{multline}
\end{lemma}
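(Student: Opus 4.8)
My plan is to derive the two-sided extreme value law of $Y_n = \max_i |X_i|$ from the one-sided law of $U_n = \max_i X_i$ supplied by Lemma \ref{lem:fisher}, exploiting the fact that, to leading order, the maximum of $n$ absolute values of standard Gaussians behaves like the maximum of $2n$ signed ones. Concretely, I would start from the exact cdf of Lemma \ref{lem:fin_dim_distr} specialized to $\sigma = 1$,
\begin{equation}
\mathbb{P}\left[Y_n \leqslant y\right] = \left[2\Phi(y) - 1\right]^n = \left[1 - 2\overline{\Phi}(y)\right]^n, \quad y \geqslant 0,
\end{equation}
where $\overline{\Phi} = 1 - \Phi$ is the standard normal survival function, and compare it with the one-sided cdf $\mathbb{P}\left[U_m \leqslant y\right] = \left[1 - \overline{\Phi}(y)\right]^m$ evaluated at $m = 2n$.

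The key step is a tail-equivalence argument. Along any sequence $y = y_n \to \infty$ one has $\overline{\Phi}(y) \to 0$, so taking logarithms and expanding gives $n\ln(1 - 2\overline{\Phi}(y)) = -2n\overline{\Phi}(y) + O(n\overline{\Phi}(y)^2)$ and likewise $2n\ln(1 - \overline{\Phi}(y)) = -2n\overline{\Phi}(y) + O(n\overline{\Phi}(y)^2)$. Evaluated along the normalizing sequence of $U_{2n}$, for which $2n\overline{\Phi}(y_n) = O(1)$, both remainders are $O(\overline{\Phi}(y_n)) \to 0$. Hence $\mathbb{P}\left[Y_n \leqslant y\right]$ and $\mathbb{P}\left[U_{2n} \leqslant y\right]$ share the same Gumbel limit with identical centering and scaling sequences, i.e. $Y_n$ is asymptotically Gumbel with $\mu_n^Y = \mu_{2n}^U$ and $\sigma_n^Y = \sigma_{2n}^U$.

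It then remains to substitute $n \mapsto 2n$ in Lemma \ref{lem:fisher} and simplify, using $\frac{(2n)^2}{2\pi} = \frac{2n^2}{\pi}$ and $\ln\frac{(2n)^2}{2\pi} = \ln\frac{2n^2}{\pi}$, which reproduces the stated $\mu_n$ and $\sigma_n$ verbatim (the $o(1/\sqrt{\ln n})$ error being invariant under $n \mapsto 2n$). Finally I would feed these into the Gumbel moment identities $\mathbb{E}[G] = \mu + \gamma\sigma$ and $\var{G} = \frac{\pi^2}{6}\sigma^2$ recorded above. Factoring $\sqrt{\ln(2n^2/(\pi\ln(2n^2/\pi)))}$ out of $\mu_n + \gamma\sigma_n$ yields (\ref{eq:mean_y}), while squaring $\sigma_n$ yields (\ref{eq:var_y}). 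The only bookkeeping needed is that the absolute error $o(1/\sqrt{\ln n})$ becomes a relative error $o(1/\ln n)$ after division by $\mu_n \sim \sqrt{2\ln n}$, and that the cross term in $\sigma_n^2$ is of the same order.

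The main obstacle is making the heuristic $Y_n \approx U_{2n}$ rigorous: one must verify that the two-sided maximum lies in the Gumbel domain of attraction with the \emph{same} normalizing constants as $U_{2n}$, which hinges on the uniform control of the logarithmic remainder $O(n\overline{\Phi}(y)^2)$ along the normalizing sequence. Everything downstream — the substitution $n \mapsto 2n$ and the Taylor bookkeeping for the mean and variance — is routine once this equivalence is in hand.
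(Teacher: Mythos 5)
Your proposal is correct and takes essentially the same route as the paper: the paper's own proof is a two-line remark that the distribution of $Y_n$ converges to that of $U_{2n}$, followed by a citation of Lemma \ref{lem:fisher} with $n \mapsto 2n$. Your tail-equivalence argument (comparing $\left[1-2\overline{\Phi}(y)\right]^n$ with $\left[1-\overline{\Phi}(y)\right]^{2n}$ along the normalizing sequence) rigorously justifies exactly the identification the paper asserts without proof, and your subsequent substitution and moment bookkeeping match the paper's conclusion verbatim.
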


\begin{proof}
Note that the distribution of $Y_n$ converges to $U_{2n}$ an $n$ grows. Now the claim follows from Lemma \ref{lem:fisher}.
\end{proof}




\begin{lemma}[Properties of sub-Gaussian random variables, \cite{vershynin2018high, bernstein1924modification}]
\label{lem:subgaus_prop}
If $W$ and $U$ are independent symmetric sub-Gaussian random variables with variance proxies $\sigma_W^2$ and $\sigma_U^2$, respectively, then
\begin{enumerate}
\item $W + U$ is sub-Gaussian with variance proxy $\sigma_W^2 + \sigma_U^2$,
\item the following tail bounds apply to any of them
\begin{equation}
\label{eq:bernst_bound}
\mathbb{P}\[W \geqslant t\] =\mathbb{P}\[W \leqslant -t\] \leqslant \exp\(-\frac{t^2}{2\sigma_W^2}\),\;\; \forall t \geqslant 0.
\end{equation}
\end{enumerate}
\end{lemma}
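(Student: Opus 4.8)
The plan is to verify the two assertions separately, each following directly from the defining moment-generating-function bound in Definition \ref{def:subgauss} by standard arguments. Neither requires machinery beyond elementary manipulations of the MGF.

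For the first claim, I would exploit the independence of $W$ and $U$ to factor the moment generating function of the sum. Writing $\mathbb{E}[e^{s(W+U)}] = \mathbb{E}[e^{sW}]\,\mathbb{E}[e^{sU}]$, I then bound each factor by its respective sub-Gaussian estimate $e^{\sigma_W^2 s^2/2}$ and $e^{\sigma_U^2 s^2/2}$. Multiplying these yields $e^{(\sigma_W^2+\sigma_U^2)s^2/2}$ for every $s \in \mathbb{R}$, which is precisely the statement that $W+U$ is sub-Gaussian with variance proxy $\sigma_W^2+\sigma_U^2$.

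For the second claim, I would invoke the Chernoff bounding technique. For any $s>0$ and $t\geqslant 0$, Markov's inequality applied to the nonnegative variable $e^{sW}$ gives $\mathbb{P}[W\geqslant t]\leqslant e^{-st}\mathbb{E}[e^{sW}]\leqslant e^{\sigma_W^2 s^2/2 - st}$. Minimizing the exponent over $s>0$ — the quadratic $\sigma_W^2 s^2/2 - st$ attains its minimum at $s=t/\sigma_W^2$ — produces the bound $\exp(-t^2/(2\sigma_W^2))$. The companion equality $\mathbb{P}[W\geqslant t]=\mathbb{P}[W\leqslant -t]$ is immediate from the assumed symmetry of $W$, since $W$ and $-W$ then share the same law.

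There is no genuine obstacle here; both statements are textbook facts about sub-Gaussian variables. The only points deserving a word of care are structural: the variance-proxy additivity in the first part hinges essentially on independence, so that the MGFs factor with no cross terms, whereas the tail bound in the second part holds for a single sub-Gaussian variable with no independence assumption whatsoever. Likewise, symmetry is used only to transfer the one-sided estimate to the lower tail; the one-sided Chernoff bound itself requires neither independence nor symmetry.
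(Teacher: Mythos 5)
Your proof is correct. Note that the paper itself offers no proof of this lemma: it is stated as a known result with citations to \cite{vershynin2018high, bernstein1924modification}, so there is no in-paper argument to compare against. Your two-step argument --- MGF factorization under independence for the variance-proxy additivity, and the Chernoff bound optimized at $s = t/\sigma_W^2$ combined with symmetry for the two-sided tail estimate --- is exactly the standard proof found in the cited references, and your closing remarks correctly identify which hypotheses (independence, symmetry) each part actually uses.
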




\begin{lemma}
\label{lem:fubini}
For two independent centered real random variables $X$ and $Y$,
\begin{equation}
\mathbb{E}\[(XY)^q\] = \mathbb{E}\[X^q\]\mathbb{E}\[Y^q\].
\end{equation}
\end{lemma}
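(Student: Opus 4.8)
The plan is to reduce the claim to the elementary multiplicativity of expectation for independent random variables, with Fubini's theorem supplying the factorization of the resulting double integral. First I would observe that for the integer exponents $q$ arising in our moment computations the pointwise identity $(XY)^q = X^q Y^q$ holds, so the assertion is equivalent to $\mathbb{E}\[X^q Y^q\] = \mathbb{E}\[X^q\]\mathbb{E}\[Y^q\]$. Since $X$ and $Y$ are independent and the map $x \mapsto x^q$ is measurable, the transformed variables $X^q$ and $Y^q$ are themselves independent, and hence their joint law is the product measure $\mu_X \otimes \mu_Y$ of the two marginals.

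Next I would invoke Fubini's theorem. Writing the expectation of the product as an integral of $x^q y^q$ against $\mu_X \otimes \mu_Y$, the integrand separates, so the double integral factors as $\(\int x^q\, d\mu_X\)\(\int y^q\, d\mu_Y\)$, which is precisely $\mathbb{E}\[X^q\]\mathbb{E}\[Y^q\]$. The only hypothesis Fubini demands is integrability, namely $\mathbb{E}\[|X|^q\]<\infty$ and $\mathbb{E}\[|Y|^q\]<\infty$; by Tonelli applied to the nonnegative integrand $|x|^q|y|^q$ these immediately yield $\mathbb{E}\[|XY|^q\] = \mathbb{E}\[|X|^q\]\mathbb{E}\[|Y|^q\]<\infty$, which licenses the interchange for the signed integrand.

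The step deserving the most care is therefore the integrability check rather than the algebra: one must confirm the $q$-th absolute moments are finite before permitting the factorization. In the applications of interest $X$ and $Y$ appear as bounded or sub-Gaussian functions of Gaussian variables, whose absolute moments of every order $q \geqslant 1$ are finite, so this condition is automatically satisfied and the centeredness assumption plays no role beyond fixing the context in which the lemma is used. I do not anticipate any genuine obstacle here, as the statement is a direct specialization of the product rule for expectations of independent variables.
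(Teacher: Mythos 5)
Your proof is correct and follows essentially the same route as the paper, whose entire proof is the one-line remark that the claim follows from Fubini's theorem; you simply fill in the details (independence of $X^q$ and $Y^q$, factorization over the product measure, and the Tonelli-based integrability check) that the paper leaves implicit. No gaps here.
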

\begin{proof}
Follows from Fubini's Theorem.
\end{proof}

\begin{lemma}[$\Delta E_s$ Concentration for fixed $Y^{(1)}$ and $Y^{(2)}$]
\label{lem:fixed_y_bound}
For fixed $Y^{(1)}$ and $Y^{(2)}$, $\Delta E_s$ is a symmetric sub-Gaussian random variable with variance proxy
\begin{align}
\label{eq:sumd_vp}
\sigma_{\Delta E_s\, |\, Y^{(1)}, Y^{(2)}}^2 &= (n-1)\[Y^{(1)}\]^2\frac{\sigma^2}{4\alpha_1^2} + (n-1)\[Y^{(2)}\]^2\frac{\sigma^2}{4\alpha_2^2} \nonumber \\
&\qquad + (n-2)\[Y^{(1)}Y^{(2)}\]^2\frac{1}{2^7 \alpha_1^2\alpha_2^2}.
\end{align}
\end{lemma}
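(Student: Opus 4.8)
The plan is to expand $\Delta E_s$ into a coordinatewise sum of independent symmetric sub-Gaussian contributions and then add up their individual variance proxies via part (1) of Lemma \ref{lem:subgaus_prop}. The starting point is the identity
\begin{equation}
\frac{Y^{(j)}}{\alpha_j}\mathcal{I}\[Z_i^{(j)}\] = X_i^{(j)} - \frac{Y^{(j)}}{\alpha_j}\Delta Z_i^{(j)},
\end{equation}
which follows from $Z_i^{(j)} = \alpha_j X_i^{(j)}/Y^{(j)}$ together with the definition of $\Delta Z_i^{(j)}$. Substituting this into (\ref{eq:sc_error}), the common term $\sum_i X_i^{(1)}X_i^{(2)}$ cancels and the cross terms rearrange into
\begin{align}
\Delta E_s = \sum_i \frac{Y^{(1)}}{\alpha_1}\Delta Z_i^{(1)}X_i^{(2)}
&+ \sum_i\frac{Y^{(2)}}{\alpha_2}X_i^{(1)}\Delta Z_i^{(2)} \nonumber \\
&- \sum_i\frac{Y^{(1)}Y^{(2)}}{\alpha_1\alpha_2}\Delta Z_i^{(1)}\Delta Z_i^{(2)}.
\end{align}
Thus the error splits into three term types: two ``bounded $\times$ Gaussian'' contributions and one ``bounded $\times$ bounded'' cross contribution, matching the three summands of (\ref{eq:sumd_vp}).

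Next I would fix $Y^{(1)},Y^{(2)}$ and work in the conditional model. After also conditioning on the argmax locations, the off-maximum entries are i.i.d.\ symmetric truncated normals (hence mutually independent), while the errors $\Delta Z_i^{(j)}$ are modeled as i.i.d.\ uniform on $\[-\tfrac12,\tfrac12\]$, independent of the $X$'s. Crucially, $\Delta Z_{i_*^{(j)}}^{(j)}=0$ because $Z_{i_*}^{(j)}=\pm\alpha_j$ is an integer; this is exactly what produces the index counts, since the first two types survive only on the $n-1$ indices $i\neq i_*^{(j)}$ and the cross term only on the $n-2$ indices avoiding both $i_*^{(1)}$ and $i_*^{(2)}$. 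Symmetry of $\Delta E_s$ follows from the sign flip $X_i^{(1)}\mapsto -X_i^{(1)}$ for all $i$: this is measure preserving for the conditional law, it sends $\Delta Z_i^{(1)}\mapsto -\Delta Z_i^{(1)}$ because rounding is odd, and it negates all three term types simultaneously, so $\Delta E_s\mapsto -\Delta E_s$.

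I would then evaluate the three variance proxies. For a first-type term I condition on $\Delta Z_i^{(1)}$, use that $X_i^{(2)}$ is sub-Gaussian with proxy $\sigma^2$, and bound $\(\Delta Z_i^{(1)}\)^2\leqslant \tfrac14$; this yields proxy $\frac{[Y^{(1)}]^2\sigma^2}{4\alpha_1^2}$ per index, and summing over the $n-1$ indices gives the first summand, the second type being symmetric. For the cross term I would compute the moment generating function of $\Delta Z_i^{(1)}\Delta Z_i^{(2)}$ directly: expanding the exponential and applying Lemma \ref{lem:fubini} factors each moment as $\mathbb{E}\[(\Delta Z^{(1)})^q\]\mathbb{E}\[(\Delta Z^{(2)})^q\]$, the odd orders vanish, and the even moments of a uniform on $\[-\tfrac12,\tfrac12\]$ equal $\frac{1}{(2m+1)4^m}$. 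Matching the resulting power series coefficientwise against $e^{\tau^2 s^2/2}$ shows $\tau^2=2^{-7}$ is admissible, giving the third summand with its $n-2$ count. A final application of part (1) of Lemma \ref{lem:subgaus_prop} adds the proxies of the independent coordinate terms to produce (\ref{eq:sumd_vp}); part (2) then supplies the advertised tail behavior.

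I expect two delicate points. The first is the passage to conditional independence and the claim that symmetric truncation of a centered normal does not inflate its sub-Gaussian proxy beyond $\sigma^2$; the lone boundary index $i=i_*^{(2)}$ in the first term type, where $X^{(2)}$ is pinned at $\pm Y^{(2)}$, must be argued to be absorbed into the generic count and to affect only lower-order behavior. The second, and more quantitative, is pinning down the cross-term constant: the binding coefficient in the termwise comparison is the fourth-order one, and one must verify that $\tau^2=2^{-7}$ dominates the value forced there (and at every higher order). This is precisely where the factorized moment formula from Lemma \ref{lem:fubini} carries the argument, and it is the step I regard as the main obstacle.
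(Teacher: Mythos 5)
Your proposal is correct and follows the paper's own proof in all essentials: the same per-index decomposition of $\Delta E_s$ into the two ``Gaussian $\times$ uniform'' terms and the ``uniform $\times$ uniform'' cross term, the same conditioning on $Y^{(1)},Y^{(2)}$ and the argmax locations to obtain independence, the same counting ($n-1$, $n-1$, $n-2$) coming from $\Delta Z_{i_*^{(j)}}^{(j)}=0$, the same coefficientwise power-series comparison giving the cross-term proxy $2^{-7}$ (and your identification of the fourth-order coefficient as the binding one checks out: it forces roughly $1/139$, which $1/128$ dominates), and the same final additivity step via Lemma \ref{lem:subgaus_prop}. The one place you diverge is the proxy $\frac{\sigma^2}{4}$ for the mixed terms $X_i^{(3-j)}\Delta Z_i^{(j)}$: the paper expands the moment generating function into moments and invokes double-factorial identities together with the central-binomial/Stirling bound, whereas you condition on $\Delta Z_i^{(j)}$, apply the exact Gaussian MGF, and use $\(\Delta Z_i^{(j)}\)^2\leqslant \tfrac14$ --- a shorter route to the same constant that avoids the asymptotic Stirling step. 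Two further remarks in your favor: your sign on the cross term is the correct one (the paper's display carries a $+$ where a $-$ belongs, immaterial for the symmetric sub-Gaussian analysis), and the two subtleties you flag --- the truncated-normal conditional law of the off-maximum entries and the pinned value $X_{i_*^{(2)}}^{(2)}=\pm Y^{(2)}$ at the boundary index --- are genuine, but the paper's own proof passes over both silently, so neither constitutes a gap of your argument relative to the published one.
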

\begin{proof}


Recall that
\begin{align}
\Delta E_s & = \frac{Y^{(1)}}{\alpha_1}\frac{Y^{(2)}}{\alpha_2}\[\sum_i Z_i^{(1)}Z_i^{(2)} - \mathcal{I}\[Z_i^{(1)}\]\mathcal{I}\[Z_i^{(2)}\]\] \nonumber \\
&= \frac{Y^{(1)}}{\alpha_1}\frac{Y^{(2)}}{\alpha_2}\[\sum_i \Delta Z_i^{(1)}Z_i^{(2)} + \Delta Z_i^{(2)}Z_i^{(1)} \right. \nonumber \\
& \qquad\qquad \left. + \Delta Z_i^{(1)} \Delta Z_i^{(2)}\].
\end{align}
To simplify notation, introduce auxiliary random variables
\begin{align}
D_i &= \frac{Y^{(1)}}{\alpha_1}\Delta Z_i^{(1)} X_i^{(2)} + \frac{Y^{(2)}}{\alpha_2}\Delta Z_i^{(2)}X_i^{(1)} \nonumber \\
&\qquad + \frac{Y^{(1)}}{\alpha_1}\frac{Y^{(2)}}{\alpha_2}\Delta Z_i^{(1)} \Delta Z_i^{(2)},\;\; i=1,\dots,n.
\end{align}
Recall that the values $Y^{(1)}, Y^{(2)}$ are given and assume without loss of generality that the indices $i_*^{(1)}, i_*^{(2)}$ of the block extreme values are known, then $D_i$ are independent of each other for all $i$. 

Recall that $\Delta Z_i^{(j)}$ is assumed to be uniform in the interval $\[-\frac{1}{2},\frac{1}{2}\]$ and therefore its odd moments are zero and its even moments read as
\begin{equation}
\label{eq:unif_mom}
\mathbb{E}\[\(\Delta Z_i^{(j)}\)^{2q}\] = \frac{1}{(2q+1)2^{2q}},\;\; \forall q \in \mathbb{N}.
\end{equation}
Since $X_i^{(3 -j)}$ is normal centered with variance $\sigma^2$ and independent of $\Delta Z_i^{(j)}$ given $Y^{(1)}$ and $Y^{(2)}$, we can apply Lemma \ref{lem:fubini} to infer
\begin{align}
\label{eq:prod_mom_bound}
&\mathbb{E}\[\(X_i^{(3-j)}\Delta Z_i^{(j)}\)^{2q}\] = \mathbb{E}\[\(X_i^{(3-j)}\)^{2q}\]\mathbb{E}\[\(\Delta Z_i^{(j)}\)^{2q}\] \nonumber \\
&=\sigma^{2q}(2q-1)!!\frac{1}{(2q+1)2^{2q}} = 
\frac{\sigma^{2q}(2q-1)!!}{2^{2q}(2q+1)},\;\; \forall q \in \mathbb{N}.
\end{align}
Now let us show that $X_i^{(3-j)}\Delta Z_i^{(j)}$ is sub-Gaussian with variance proxy $\frac{\sigma^2}{4}$. Indeed,
\begin{align}
\label{eq:subg_dzx}
&\mathbb{E}\[\exp\(sX_i^{(3-j)}\Delta Z_i^{(j)}\)\] = 1 + \sum_{q=1}^\infty \frac{s^{2q}\sigma^{2q}(2q-1)!!}{2^{2q}(2q+1)(2q)!} \nonumber \\
&\leqslant 1 + \sum_{q=1}^\infty \frac{(2q)!!q!}{2q(2q)!}\frac{\(\frac{\sigma^2 s^2}{2^2}\)^q}{q!} = 1 + \sum_{q=1}^\infty \frac{q!q!}{2q(2q)!}\frac{\(\frac{\sigma^2 s^2}{2}\)^q}{q!} \nonumber \\
&= 1 + \sum_{q=1}^\infty \frac{1}{2q {2q \choose q}}\frac{\(\frac{\sigma^2 s^2}{2}\)^q}{q!},
\end{align}
where we used the double factorial identity
\begin{equation}
(2q)!! = 2^q q!,\;\; \forall q \in \mathbb{N}.
\end{equation}
Next we use the central binomial coefficient approximation
\begin{equation}
{2q \choose q} = \frac{(2q)!}{q!q!} = \frac{2^{2q}}{\sqrt{q\pi}}\(1+O\(\frac{1}{q}\)\),\;\; q\to \infty,
\end{equation}
following from the Stirling approximation to get
\begin{multline}
\label{eq:xz_subgaus}
\mathbb{E}\[\exp\(sX_i^{(3-j)}\Delta Z_i^{(j)}\)\] \leqslant 1 + \sum_{q=1}^\infty \frac{\sqrt{q\pi}}{2\sqrt{q} 2^{2q}}\frac{\(\frac{\sigma^2 s^2}{2}\)^q}{q!} \\
\leqslant 1 + \sum_{q=1}^\infty \frac{\(\frac{\sigma^2 s^2}{2^3}\)^q}{q!} = e^{\frac{1}{2}\frac{\sigma^2}{4}s^2}.
\end{multline}
This demonstrates that the variance proxy of $X_i^{(3-j)}\Delta Z_i^{(j)}$ is $\frac{\sigma^2}{4}$. Note in addition that the left-hand side of (\ref{eq:xz_subgaus}) is the moment generating function of $X_i^{(3-j)}\Delta Z_i^{(j)}$, therefore, the variance of the latter is bounded by half the variance proxy,
\begin{equation}
\label{eq:var_bound}
\var{X_i^{(3-j)}\Delta Z_i^{(j)}} \leqslant \frac{\sigma^2}{8}.
\end{equation}
In an analogous manner from Lemma \ref{lem:fubini} and (\ref{eq:unif_mom}) we get
\begin{equation}
\mathbb{E}\[\(\Delta Z_i^{(3-j)}\Delta Z_i^{(j)}\)^{2q}\] = \frac{1}{(2q+1)^22^{4q}},\;\; \forall q \in \mathbb{N},
\end{equation}
and therefore,
\begin{align}
\label{eq:subg_dz}
&\mathbb{E}\[\exp\(s\Delta Z_i^{(3-j)}\Delta Z_i^{(j)}\)\] = 1 + \sum_{q=1}^\infty \frac{s^{2q}}{(2q+1)^22^{4q}(2q)!} \nonumber \\
&= 1 + \sum_{q=1}^\infty \frac{q!}{(2q+1)^2(2q)!}\frac{\(\frac{s^2}{2^4}\)^q}{q!} \nonumber \\
&\leqslant 1 + \sum_{q=1}^\infty \frac{\sqrt{2\pi q}\(\frac{q}{e}\)^q}{(2q+1)^2\sqrt{4\pi q}\(\frac{2q}{e}\)^{2q}}\frac{\(\frac{s^2}{2^4}\)^q}{q!} \nonumber \\
&\leqslant 1 + \sum_{q=1}^\infty \frac{1}{(2q+1)^2\sqrt{2}q^q}\frac{\(\frac{s^2e}{2^6}\)^q}{q!} \leqslant 1 + \sum_{q=1}^\infty \frac{1}{(4e)^q}\frac{\(\frac{s^2e}{2^6}\)^q}{q!} \nonumber \\
&= 1 + \sum_{q=1}^\infty \frac{\(\frac{s^2}{2^8}\)^q}{q!} = e^{\frac{1}{2}\frac{s^2}{2^7}},
\end{align}
where we have used the Stirling approximation again. From (\ref{eq:subg_dz}) we conclude that $\Delta Z_i^{(3-j)}\Delta Z_i^{(j)}$ is sub-Gaussian with variance proxy $\frac{1}{2^7}$.

Lemma \ref{lem:subgaus_prop} now implies that $D_i$ are sub-Gaussian with variance proxies
\begin{align}
\sigma_{D_i}^2 = \begin{cases}
\[Y^{(1)}\]^2\frac{\sigma^2}{4\alpha_1^2} + \[Y^{(2)}\]^2\frac{\sigma^2}{4\alpha_2^2} \\ \qquad + \[Y^{(1)}Y^{(2)}\]^2\frac{1}{2^7 \alpha_1^2\alpha_2^2}, &\text{ if } i \notin \{i_*^{(1)}, i_*^{(2)}\}, \\ 
\[Y^{(1)}\]^2\frac{\sigma^2}{4\alpha_1^2}, &\text{ if } i = i_*^{(2)}, \\ 
\[Y^{(2)}\]^2\frac{\sigma^2}{4\alpha_2^2}, &\text{ if } i = i_*^{(1)}.
\end{cases}
\end{align}
The same Lemma \ref{lem:subgaus_prop} applied once again yields,
\begin{align}
\sigma_{\sum_i D_i}^2 &= (n-1)\[Y^{(1)}\]^2\frac{\sigma^2}{4\alpha_1^2} + (n-1)\[Y^{(2)}\]^2\frac{\sigma^2}{4\alpha_2^2} \nonumber \\
&+ (n-2)\[Y^{(1)}Y^{(2)}\]^2\frac{1}{2^7 \alpha_1^2\alpha_2^2},
\end{align}
concluding the proof.

\end{proof}


\begin{proof}[Proof of Proposition \ref{prop:sbfp_bound_asymp}]
Lemma \ref{lem:fixed_y_bound} establishes that $\Delta E_s$ is a symmetric sub-Gaussian random variable with variance proxy given by equation (\ref{eq:sumd_vp}) for fixed $Y^{(1)}$ and $Y^{(2)}$. We now want to get the unconditional large deviation probability bound for $\Delta E_s$ claimed by Proposition \ref{prop:sbfp_bound_asymp}. To this end, we apply the law of total expectation to the moment generating function to obtain
\begin{equation}
\label{eq:total_mgf}
\mathbb{E}\[e^{s\Delta E_s}\] = \mathbb{E}_{Y^{(1)}, Y^{(2)}}\[\mathbb{E}\[e^{s\Delta E_s} \;\big|\; Y^{(1)}, Y^{(2)}\]\].
\end{equation}
Note that according to Lemma \ref{lem:gumbel_dist_y}, the variance of the distribution of $Y_n$ vanishes when $n$ grows to infinity, meaning that the distributions of $Y^{(1)}$ and $Y^{(2)}$ converge weakly to the Dirac delta function. Hence, the outer integration in (\ref{eq:total_mgf}) over the densities of $Y^{(1)}$ and $Y^{(2)}$ boils down to plugging their mean values that are equal for a fixed value of $n$. From Lemma \ref{lem:fixed_y_bound} we conclude that $\Delta E_s$ is centered sub-Gaussian with variance proxy
\begin{align}
\sigma_{\Delta E_s}^2 &= (n-1)\mathbb{E}\[Y^{(1)}\]^2\frac{\sigma^2}{4\alpha_1^2} + (n-1)\mathbb{E}\[Y^{(2)}\]^2\frac{\sigma^2}{4\alpha_2^2} \nonumber \\
&\qquad + (n-2)\mathbb{E}\[Y^{(1)}\]^2\mathbb{E}\[Y^{(2)}\]^2\frac{1}{2^7 \alpha_1^2\alpha_2^2} \nonumber \\
&= (n-1)\[\mathbb{E}\[\frac{Y^{(1)}}{\sigma}\]^2\frac{\sigma^4}{4\alpha_1^2} + \mathbb{E}\[\frac{Y^{(2)}}{\sigma}\]^2\frac{\sigma^4}{4\alpha_2^2}\] \nonumber \\
&\qquad + (n-2)\frac{\mathbb{E}\[Y^{(1)}\]^2\mathbb{E}\[Y^{(2)}\]^2}{2^7 \alpha_1^2\alpha_2^2}, \;\; n \to \infty.
\end{align}
Note also that according to Lemma \ref{lem:gumbel_dist_y},
\begin{equation}
\frac{Y^{(j)}}{\sigma} \leqslant \sqrt{2\ln (2n)}.
\end{equation}
with overwhelming probability. Hence, for $1 \ll \ln\, n \ll \max_j 2^{p_j}$ we can use Lemma \ref{lem:gumbel_dist_y} once again to upper bound the last summand in the last equation to obtain
\begin{equation}
\label{eq:sumd_vp_uncond_2}
\sigma_{\Delta E_s}^2 \leqslant \[\frac{\sigma^4}{4\alpha_1^2} + \frac{\sigma^4}{4\alpha_2^2}\] n\,\ln\(\frac{2n^2}{\pi\,\ln\(\frac{2n^2}{\pi}\)}\).
\end{equation}
Note that according to equation (\ref{eq:var_bound}) and the explanation around it, the variance of a sub-Gaussian variable is bounded by half its variance proxy to conclude the desired claim.
\end{proof}

\begin{lemma}[$\Delta E_b$ Concentration for fixed $Y^{(1)}$ and $Y^{(2)}$]
\label{lem:fixed_y_bound_bfp}
For fixed $Y^{(1)}$ and $Y^{(2)}$, $\Delta E_b$ is a symmetric sub-Gaussian random variable with variance proxy
\begin{align}
\label{eq:sumd_vp_bfp}
\sigma_{\Delta E_b\, |\, Y^{(1)}, Y^{(2)}}^2 &= \frac{(n-1)\sigma^2}{4}\[2^{2\ceil*{\log_2 \frac{Y^{(1)}}{\alpha_1}}} + 2^{2\ceil*{\log_2 \frac{Y^{(2)}}{\alpha_2}}}\] \nonumber \\
&\quad+ \frac{(n-2)}{2^7}2^{2\ceil*{\log_2 \frac{Y^{(1)}}{\alpha_1}} + 2\ceil*{\log_2 \frac{Y^{(2)}}{\alpha_2}}}.
\end{align}
\end{lemma}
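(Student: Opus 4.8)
The plan is to rerun the proof of Lemma~\ref{lem:fixed_y_bound} essentially verbatim, the only structural change being that the infinite-precision SBFP scale $\frac{Y^{(j)}}{\alpha_j}$ is replaced everywhere by the power-of-two BFP scale $\bar{S}_j = 2^{\ceil*{\log_2 \frac{Y^{(j)}}{\alpha_j}}}$. Writing the reconstructed BFP values as $\bar{S}_j\,\mathcal{I}\[X_i^{(j)}/\bar{S}_j\]$ and letting $\Delta \tilde{Z}_i^{(j)} = X_i^{(j)}/\bar{S}_j - \mathcal{I}\[X_i^{(j)}/\bar{S}_j\]$ denote the associated rounding error, the per-element contribution to $\Delta E_b$ expands, exactly as in the SBFP case, into three independent summands $\bar{S}_1 \Delta \tilde{Z}_i^{(1)} X_i^{(2)}$, $\bar{S}_2 \Delta \tilde{Z}_i^{(2)} X_i^{(1)}$ and $\bar{S}_1\bar{S}_2\,\Delta \tilde{Z}_i^{(1)}\Delta \tilde{Z}_i^{(2)}$ (with a sign on the last term that is immaterial for variance proxies). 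I would collect their sum into the auxiliary variables $D_i$ as before and note that, conditionally on $Y^{(1)}, Y^{(2)}$ and on the extreme indices, the $D_i$ are independent.

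The reason this transfer is painless is that the two distributional sub-Gaussianity estimates proved inside Lemma~\ref{lem:fixed_y_bound} say nothing about the actual value of the scale. Concretely, the bound (\ref{eq:xz_subgaus}) showing that $X_i^{(3-j)}\Delta \tilde{Z}_i^{(j)}$ has variance proxy $\frac{\sigma^2}{4}$, and the bound (\ref{eq:subg_dz}) showing that $\Delta \tilde{Z}_i^{(1)}\Delta \tilde{Z}_i^{(2)}$ has variance proxy $\frac{1}{2^7}$, rely only on $X_i^{(3-j)}$ being centered Gaussian of variance $\sigma^2$ and on the rounding errors being uniform on $\[-\frac12,\frac12\]$. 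Both properties hold equally under the BFP rescaling, so the two estimates apply to $\Delta \tilde{Z}_i^{(j)}$ verbatim.

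I would then invoke Lemma~\ref{lem:subgaus_prop}. Multiplying a variance-proxy-$c^2$ variable by a constant $a$ produces proxy $a^2c^2$ (immediate from the moment generating function definition), part~1 adds the proxies of independent summands, and the sign on the cross term is irrelevant because proxies are insensitive to it. Hence a non-extreme $D_i$ is sub-Gaussian with proxy $\frac{\sigma^2}{4}\(\bar{S}_1^2 + \bar{S}_2^2\) + \frac{1}{2^7}\bar{S}_1^2\bar{S}_2^2$, which is exactly the per-index summand of (\ref{eq:sumd_vp_bfp}) once one writes $\bar{S}_j^2 = 2^{2\ceil*{\log_2 \frac{Y^{(j)}}{\alpha_j}}}$ and $\bar{S}_1^2\bar{S}_2^2 = 2^{2\ceil*{\log_2 \frac{Y^{(1)}}{\alpha_1}}+2\ceil*{\log_2 \frac{Y^{(2)}}{\alpha_2}}}$. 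A final application of part~1 across the independent $D_i$ accumulates these proxies into the claimed total.

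The step needing genuine care---and the main obstacle---is the accounting at the two extreme indices $i_*^{(1)}, i_*^{(2)}$, since this is precisely what turns the naive multiplicities $n, n, n$ into $n-1, n-1, n-2$. In SBFP the block maximum rescales to the integer $\pm\alpha_j$, so $\Delta Z_{i_*}^{(j)} = 0$ and every summand carrying it drops out. Under BFP the maximum instead rescales to $\pm Y^{(j)}/\bar{S}_j \in \(\frac{\alpha_j}{2},\alpha_j\]$, which is generally not an integer, so this vanishing is no longer automatic. I would handle it consistently with the SBFP convention, treating the block maximum as encoded without rounding error; this keeps the derivation parallel and perturbs the index count by at most two, which is negligible under the standing hypothesis $1 \ll \ln n$ and leaves the stated proxy (\ref{eq:sumd_vp_bfp}) intact.
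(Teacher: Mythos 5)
Your proposal is correct and takes exactly the paper's route: the paper's entire proof of Lemma~\ref{lem:fixed_y_bound_bfp} is the remark that the argument of Lemma~\ref{lem:fixed_y_bound} carries over verbatim once each scale $\frac{Y^{(j)}}{\alpha_j}$ is replaced by $2^{\ceil*{\log_2 \frac{Y^{(j)}}{\alpha_j}}}$, which is precisely your transfer argument, including the observation that the sub-Gaussian estimates for $X_i^{(3-j)}\Delta Z_i^{(j)}$ and $\Delta Z_i^{(1)}\Delta Z_i^{(2)}$ are insensitive to the value of the scale. Your treatment of the extreme indices is in fact more careful than the paper's: the paper silently retains the multiplicities $n-1$, $n-1$, $n-2$ even though under BFP the block maximum no longer rescales to the integer $\pm\alpha_j$, and your note that this perturbs the proxy by at most two terms (or can be absorbed by a storage convention for the maximum) addresses a subtlety the paper's one-line proof never acknowledges.
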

\begin{proof}
The proof follows that of Proposition \ref{lem:fixed_y_bound} verbatim after replacing the block scales
\begin{equation}
\frac{Y^{(j)}}{\alpha_j} \mapsto 2^{\ceil*{\log_2 \frac{Y^{(j)}}{\alpha_j}}},\;\; j=1,2.
\end{equation}
according to the explanation in Section \ref{sec:bfp_quant}.
\end{proof}

\begin{proof}[Proof of Proposition \ref{prop:bfp_bound_asymp}]
The proof follows that of Proposition \ref{prop:sbfp_bound_asymp} verbatim after substituting Lemma \ref{lem:fixed_y_bound} by Lemma \ref{lem:fixed_y_bound_bfp}.
\end{proof}

\begin{proof}[Proof of Proposition \ref{prop:hd_bfp_bound}]
Here we again follow the steps of the proof of Proposition \ref{prop:sbfp_bound_asymp} up to the application of the law of total expectation
\begin{equation}
\label{eq:total_mgf_bfp}
\mathbb{E}\[e^{s\Delta E_b}\] = \mathbb{E}_{Y^{(1)}, Y^{(2)}}\[\mathbb{E}\[e^{s\Delta E_b} \;\big|\; Y^{(1)}, Y^{(2)}\]\].
\end{equation}
Note again that with overwhelming probability
\begin{equation}
\frac{\alpha_j}{\sigma}2^{\ceil*{\log_2 \frac{Y^{(j)}}{\alpha_j}}} \leqslant \sqrt{2\ln (2n)},
\end{equation}
and for $1 \ll \ln\, n \ll \max_j 2^{p_j}$,
\begin{multline}
\label{eq:sumd_vp_bfp_bound}
\sigma_{\Delta E_b\, |\, Y^{(1)} = \sigma y_1, Y^{(2)} = \sigma y_2}^2(y_1,y_2) \\ \leqslant \frac{(n-1)\sigma^2}{4}\[2^{2\ceil*{\log_2 \frac{\sigma y_1}{\alpha_1}}} + 2^{2\ceil*{\log_2 \frac{\sigma y_2}{\alpha_2}}}\].
\end{multline}
For the moment generating function we obtain
\begin{multline}
\mathbb{E}\[e^{s\Delta E_b}\] \leqslant \int \exp\[\frac{\[\sigma_{\Delta E_b\, |\, Y^{(1)}, Y^{(2)}}(y_1,y_2)\]^2s^2}{2}\] \\ \times f_n(y_1)f_n(y_2)dy_1dy_2.
\end{multline}
where $f_n(\cdot)$ is the density defined in equation (\ref{eq:density_phi}). As a consequence, for the second moment we get
\begin{align}
&\var{\Delta E_b} \nonumber \\
&\leqslant \frac{1}{2} \int \sigma_{\Delta E_b\, |\, Y^{(1)} = \sigma y_1, Y^{(2)} = \sigma y_2}^2(y_1,y_2) f_n(y_1)f_n(y_2)dy_1dy_2 \nonumber \\
&\leqslant \frac{1}{2} \int \frac{(n-1)\sigma^2}{4}\[2^{2\ceil*{\log_2 \frac{\sigma y_1}{\alpha_1}}} + 2^{2\ceil*{\log_2 \frac{\sigma y_2}{\alpha_2}}}\] \nonumber \\
&\qquad\qquad \times f_n(y_1)f_n(y_2)dy_1dy_2,
\end{align}
which concludes the proof.
\end{proof}

\begin{proof}[Proof of Proposition \ref{prop:hd_sbfp_bound}]
The proof follows that of Proposition \ref{prop:hd_bfp_bound} verbatim after replacing $2^{\ceil*{\log_2 \frac{Y^{(j)}}{\alpha_j}}}$ with $\frac{Y^{(j)}}{\alpha_j}$ in (\ref{eq:sumd_vp_bfp_bound}).
\end{proof}

\bibliographystyle{IEEEtran}
\bibliography{ilya_bib}

\begin{thebibliography}{10}
\providecommand{\url}[1]{#1}
\csname url@samestyle\endcsname
\providecommand{\newblock}{\relax}
\providecommand{\bibinfo}[2]{#2}
\providecommand{\BIBentrySTDinterwordspacing}{\spaceskip=0pt\relax}
\providecommand{\BIBentryALTinterwordstretchfactor}{4}
\providecommand{\BIBentryALTinterwordspacing}{\spaceskip=\fontdimen2\font plus
\BIBentryALTinterwordstretchfactor\fontdimen3\font minus
  \fontdimen4\font\relax}
\providecommand{\BIBforeignlanguage}[2]{{%
\expandafter\ifx\csname l@#1\endcsname\relax
\typeout{** WARNING: IEEEtran.bst: No hyphenation pattern has been}%
\typeout{** loaded for the language `#1'. Using the pattern for}%
\typeout{** the default language instead.}%
\else
\language=\csname l@#1\endcsname
\fi
#2}}
\providecommand{\BIBdecl}{\relax}
\BIBdecl

\bibitem{vaswani2017attention}
A.~Vaswani, N.~Shazeer, N.~Parmar, J.~Uszkoreit, L.~Jones, A.~N. Gomez,
  L.~Kaiser, and I.~Polosukhin, ``Attention is all you need,'' \emph{Advances
  in Neural Information Processing Systems}, vol.~30, 2017.

\bibitem{lin2021survey}
T.~Lin, Y.~Wang, X.~Liu, and X.~Qiu, ``A survey of transformers,''
  \emph{arXiv:2106.04554}, 2021.

\bibitem{kalyan2021ammus}
K.~S. Kalyan, A.~Rajasekharan, and S.~Sangeetha, ``{AMMUS}: A survey of
  transformer-based pretrained models in natural language processing,''
  \emph{arXiv:2108.05542}, 2021.

\bibitem{devlin2018bert}
J.~Devlin, M.-W. Chang, K.~Lee, and K.~Toutanova, ``{BERT}: Pre-training of
  deep bidirectional transformers for language understanding,'' \emph{North
  American Chapter of the Association for Computational Linguistics: Human
  Language Technologies}, 2019.

\bibitem{radford2019language}
A.~Radford, J.~Wu, R.~Child, D.~Luan, D.~Amodei, I.~Sutskever \emph{et~al.},
  ``Language models are unsupervised multitask learners,'' \emph{OpenAI Blog},
  vol.~1, no.~8, p.~9, 2019.

\bibitem{wang2019benchmarking}
Y.~E. Wang, G.-Y. Wei, and D.~Brooks, ``Benchmarking {TPU}, {GPU}, and {CPU}
  platforms for deep learning,'' \emph{arXiv:1907.10701}, 2019.

\bibitem{srinivas2021bottleneck}
A.~Srinivas, T.-Y. Lin, N.~Parmar, J.~Shlens, P.~Abbeel, and A.~Vaswani,
  ``Bottleneck transformers for visual recognition,'' \emph{IEEE/CVF Conference
  on Computer Vision and Pattern Recognition}, pp. 16\,519--16\,529, 2021.

\bibitem{elster2022nvidia}
A.~C. Elster and T.~A. Haugdahl, ``{NVIDIA} {H}opper {GPU} and {G}race {CPU}
  highlights,'' \emph{Computing in Science and Engineering}, vol.~24, no.~2,
  pp. 95--100, 2022.

\bibitem{pang2022ai}
G.~Pang, ``The {AI} chip race,'' \emph{IEEE Intelligent Systems}, vol.~37,
  no.~2, pp. 111--112, 2022.

\bibitem{huang2022hardware}
S.~Huang, E.~Tang, S.~Li, X.~Ping, and R.~Chen, ``Hardware-friendly compression
  and hardware acceleration for transformer: A survey,'' \emph{Electronic
  Research Archive}, vol.~30, no.~10, pp. 3755--3785, 2022.

\bibitem{zadeh2020gobo}
A.~H. Zadeh, I.~Edo, O.~M. Awad, and A.~Moshovos, ``{GOBO}: Quantizing
  attention-based {NLP} models for low latency and energy efficient
  inference,'' \emph{IEEE/ACM International Symposium on Microarchitecture},
  pp. 811--824, 2020.

\bibitem{zafrir2019q8bert}
O.~Zafrir, G.~Boudoukh, P.~Izsak, and M.~Wasserblat, ``Q8bert: Quantized 8bit
  {BERT},'' \emph{Workshop on Energy Efficient Machine Learning and Cognitive
  Computing-NeurIPS Edition}, pp. 36--39, 2019.

\bibitem{shen2020q}
S.~Shen, Z.~Dong, J.~Ye, L.~Ma, Z.~Yao, A.~Gholami, M.~W. Mahoney, and
  K.~Keutzer, ``Q-bert: {H}essian based ultra low precision quantization of
  {BERT},'' \emph{Proceedings of the AAAI Conference on Artificial
  Intelligence}, vol.~34, no.~05, pp. 8815--8821, 2020.

\bibitem{zhang2020ternarybert}
W.~Zhang, L.~Hou, Y.~Yin, L.~Shang, X.~Chen, X.~Jiang, and Q.~Liu,
  ``Ternarybert: Distillation-aware ultra-low bit {BERT},''
  \emph{arXiv:2009.12812}, 2020.

\bibitem{micikevicius2017mixed}
P.~Micikevicius, S.~Narang, J.~Alben, G.~Diamos, E.~Elsen, D.~Garcia,
  B.~Ginsburg, M.~Houston, O.~Kuchaiev, G.~Venkatesh \emph{et~al.}, ``Mixed
  precision training,'' \emph{arXiv:1710.03740}, 2017.

\bibitem{darvish2020pushing}
B.~Rouhani, D.~Lo, R.~Zhao, M.~Liu, J.~Fowers \emph{et~al.}, ``Pushing the
  limits of narrow precision inferencing at cloud scale with {M}icrosoft
  floating point,'' \emph{Advances in Neural Information Processing Systems},
  vol.~33, pp. 10\,271--10\,281, 2020.

\bibitem{lyubomirsky2022clock}
I.~Lyubomirsky and X.~Wang, ``Block floating point ({BFP}) for efficient deep
  neural net inference,'' \emph{IEEE P3109 Working Group, June 6}, 2022.

\bibitem{dai2021vs}
S.~Dai, R.~Venkatesan, M.~Ren, B.~Zimmer, W.~Dally, and B.~Khailany,
  ``{VS}-{Q}uant: Per-vector scaled quantization for accurate low-precision
  neural network inference,'' \emph{Proceedings of Machine Learning and
  Systems}, vol.~3, pp. 873--884, 2021.

\bibitem{franchi2020tradi}
G.~Franchi, A.~Bursuc, E.~Aldea, S.~Dubuisson, and I.~Bloch, ``{TRADI}:
  Tracking deep neural network weight distributions,'' \emph{European
  Conference on Computer Vision}, pp. 105--121, 2020.

\bibitem{vershynin2018high}
R.~Vershynin, ``High-dimensional probability: An introduction with applications
  in data science,'' \emph{Cambridge University Press}, vol.~47, 2018.

\bibitem{frechet1927loi}
M.~Fr{\'e}chet, ``Sur la loi de probabilit{\'e} de l'{\'e}cart maximum,''
  \emph{Ann. Soc. Math. Polon.}, vol.~6, pp. 93--116, 1927.

\bibitem{fisher1928limiting}
R.~A. Fisher and L.~H.~C. Tippett, ``Limiting forms of the frequency
  distribution of the largest or smallest member of a sample,''
  \emph{Mathematical Proceedings of the Cambridge Philosophical Society},
  vol.~24, no.~2, pp. 180--190, 1928.

\bibitem{mises1936distribution}
R.~Mises, ``La distribution de la plus grande de n valeurs,'' \emph{Rev. Math.
  Union Interbalcanique}, vol.~1, pp. 141--160, 1936.

\bibitem{gnedenko1943distribution}
B.~Gnedenko, ``Sur la distribution limite du terme maximum d'une serie
  aleatoire,'' \emph{Annals of Mathematics}, pp. 423--453, 1943.

\bibitem{bernstein1924modification}
S.~N. Bernstein, ``On a modification of {C}hebyshev’s inequality and of the
  error formula of {L}aplace,'' \emph{Ann. Sci. Inst. Sav. Ukraine, Sect.
  Math}, vol.~1, no.~4, pp. 38--49, 1924.

\end{thebibliography}

\end{document}